\def\withnotes{0}
\def\doublecolumn{0}
\theoremstyle{plain}
\newtheorem{theorem}{Theorem}[section]
\newtheorem{lemma}[theorem]{Lemma}
\newtheorem{corollary}[theorem]{Corollary}
\theoremstyle{definition}
\newtheorem{definition}[theorem]{Definition}
\theoremstyle{remark}
    \newcommand{\thnote}[1]{{\color{red} theertha: #1}}
    \newcommand{\ylnote}[1]{{\color{blue} Yuhan: #1}}
    \newcommand{\thnote}[1]{}
    \newcommand{\ylnote}[1]{}
\newcommand{\ignore}[1]{}
\newcommand{\clip}{\text{clip}}
\newcommand{\EE}{\mathbb{E}}
\newcommand{\eps}{\varepsilon}
\newcommand{\Bin}{\text{Bin}}
\newcommand{\Poi}{\text{Poi}}
\newcommand{\Lap}{\text{Lap}}
\newcommand{\indic}[1]{{\mathbf{1}_{#1}}}
\newcommand{\Var}{\text{Var}}
\newcommand{\C}{C}
\newcommand{\lambdaAvg}{\bar{\lambda}}
\newcommand{\randClip}{\text{rand-clip}}
\newcommand{\bh}{\bar{h}}
\newcommand{\bN}{\HistSum}
\newcommand{\Cmax}{{C_m}}
\newcommand{\bp}{\mathbf{p}}
\newcommand{\Dir}{\text{Dir}}
\newcommand{\HistSum}{\overline{N}}
\title{Algorithms for bounding contribution for histogram estimation under user-level privacy}
\author{%
  \begin{tabular}[t]
{c@{\extracolsep{6.5em}}c}
  Yuhan Liu\footnote{Part of the work was done during an internship at Google.} & Ananda Theertha Suresh\\
 Cornell University & Google Research\\ 
\small \texttt{yl2976@cornell.edu} & \small \texttt{theertha@google.com}
\end{tabular}
\vspace{2ex}\\
\begin{tabular}[t]{c@{\extracolsep{6.5em}}c@{\extracolsep{6.5em}}c}
  Wennan Zhu & Peter Kairouz & Marco Gruteser\\
  Google Research & Google Research & Google Research \\
 \small \texttt{wennanzhu@google.com} & \small\texttt{kairouz@google.com}
 & \small \texttt{gruteser@google.com}
\end{tabular}
}
\begin{document}
\maketitle
\begin{abstract}
We study the problem of histogram estimation under user-level differential privacy, where the goal is to preserve the privacy of \textit{all} entries of any single user. 
We consider the heterogeneous scenario where the quantity of data can be different for each user. In this scenario, the amount of noise injected into the histogram to obtain differential privacy is proportional to the maximum user contribution, which can be amplified by few outliers. One approach to circumvent this would be to bound (or limit) the contribution of each user to the histogram. However, if users are limited to small contributions, a significant amount of data will be discarded. In this work, we propose algorithms to choose the best user contribution bound for histogram estimation under both bounded and unbounded domain settings. When the size of the domain is bounded, we propose a user contribution bounding strategy that almost achieves a two-approximation with respect to the best contribution bound in hindsight. For unbounded domain histogram estimation, we propose an algorithm that is logarithmic-approximation with respect to the best contribution bound in hindsight. This result holds without any distribution assumptions on the data. Experiments on both real and synthetic datasets verify our theoretical findings and demonstrate the effectiveness of our algorithms. We also show that clipping bias introduced by bounding user contribution may be reduced under mild distribution assumptions, which can be of independent interest.
\end{abstract}

\section{Introduction}

Differential privacy (DP)~\citep{CynthiaFKA06} provides a rigorous formulation of privacy and has been applied to many algorithmic and learning tasks that involve the access to private and sensitive information. Notable applications include private data release~\citep{hardt2010simple}, learning histograms~\citep{CynthiaFKA06}, statistical estimation~\citep{DiakonikolasHS15, KamathLSU18, acharya2020differentially, KamathSU20, AcharyaCT19, AcharyaSZ18a}, and machine learning~\citep{chaudhuri2011differentially, bassily2014private, mcmahan2017learning, DworkTTZ14, abadi2016deep}. 

In several applications, each user may contribute many data samples to a dataset. For example, one may have multiple health records in a hospital, or may type many words on their phone's virtual keyboard. Naturally, users would hope that  \textit{all} of their private data is protected. To achieve this, private algorithms should guarantee \textit{user-level} differential privacy.

However, many existing works assume that each user only contributes one data sample. Thus, an algorithm designed under this assumption can only be used to protect the privacy of each data sample but not the user. In other words, such algorithms achieve \textit{item-level} privacy, but they cannot protect privacy at the user level and may not meet the increasing privacy concerns in most applications where users may contribute a lot of data. Therefore, there has been a growing interest in revisiting differential privacy in the \textit{user-level} setting.

User-level privacy is much more stringent than the item-level counterpart. Under user-level privacy, the amount of noise added is dependent on the \emph{sensitivity} of the differential privacy mechanism, which is typically the maximum number of items contributed by any single user. Hence, even if a majority of users contribute little data and very few outliers contribute a large amount of data, then the amount of noise added will be significantly large. This begs an important question: should we limit the user contribution while providing differential privacy? This question was initiated by \citet{amin2019bounding} in the context of empirical risk minimization. They showed that not restricting user contribution results in a large amount of noise injection and restricting user contribution to achieve low sensitivity, may result in loss of a large amount of useful data and suffer from bias.   
With this observation, they provided an algorithm that determines near-optimal user contribution bound. In this work, we study the problem of bounding user contribution for general differentially private histogram estimation. 

Histogram estimation is a fundamental problem that arises in many real-world applications such as demographic data and user preferences. For example, \citet{chen2019federated} used histogram estimation to compute unigram language models via \textit{federated learning} \citep{mcmahan17fedavg, kairouz2019advances}. Beyond machine learning, \textit{federated analytics} uses histogram estimation to support the Now Playing feature on Google’s Pixel phones, a tool that shows users what song is playing in the room around them \cite{ramage2020fablog}. 

Histogram estimation can be broadly divided into two categories: estimation over bounded domains and unbounded domains. In some examples such as estimating unigram language models over finite known vocabulary, the size of the domain is finite and can be counted. We refer to this scenario as bounded domain histogram estimation. In some examples such as finding all possible words used in English, the size of the domain is the set of all strings and hence unbounded or extremely large. We refer to this scenario as unbounded histogram estimation.

There is abundant literature on histogram estimation in the context of item-level privacy, such as \citep{hay2009boosting, Suresh2019differentially, xu2013differentially}. Little has been known, however, under user-level privacy. Recently, \citep{Liu2020learning, levy2021learning}, studied the problem of histogram estimation under user-level privacy when data from users are generated from near-identical distributions.  Since users' data may come from diverse distributions in practice, we cannot leverage techniques from these works. 
Motivated by the need for algorithms that work well with heterogenous user data, we ask the following question:
\vspace{-0.75ex}
\begin{center}
\emph{Can we design private algorithms to find a (nearly) optimal bound of user contributions for histogram estimation?}
\end{center}
\vspace{-0.75ex}
Somewhat surprisingly, despite many recent works in the area, the above question has not been extensively studied. We take a step towards answering the above question by designing algorithms that perform well in the heterogeneous setting 
where both the number of samples and data distribution can be unknown and different across users (hence both need to be viewed as private information). 

Our contributions are as follows. We first study the problem of bounded domain histogram estimation, where the domain size is finite and can be enumerated efficiently. In this setting, we propose private user contribution bounding algorithms that obtain a factor two approximation compared to the best contribution bound in hindsight. We then study the problem of unbounded domain histogram estimation, where the domain size is very large and cannot be enumerated efficiently. In this setting, we propose a private user contribution bounding algorithm that achieves a logarithmic approximation compared to the best algorithm in hindsight\footnote{We emphasize that our near-optimality results are relative to a specific family of DP algorithms, i.e. those that follow the clipping and additive noise recipe. We do not claim optimality over all possible DP mechanisms.}. Finally, we investigate if the bias introduced by these user contribution bounding algorithms can be reduced by post-processing techniques and show that under mild non i.i.d. distribution assumptions, the amount of bias can be reduced by simple post processing techniques. We also provide a complete proof of the gap 
between the \textit{debiased} and the \textit{non-debiased} algorithms. We evaluate these algorithms on standard federated datasets and demonstrate the practicality of the algorithms.




The paper is organized as follows. In Section~\ref{sec:related}, we discuss related work. In Section~\ref{sec:problem_formulation}, we introduce the definition and problem formulation. In Section~\ref{sec:estimation_no_assumption} we introduce our algorithms for the bounded domain setting with no i.i.d. assumptions. In Section~\ref{sec:open-domain} we describe our algorithms for unbounded domain histogram estimation. In Section~\ref{sec:bias_reduction} we briefly introduce our debiasing algorithm and its guarantees. In Section~\ref{sec:experiments} we show the experiment results. In Section~\ref{sec:conclusion}, we conclude with a discussion about how to extend our methods to federated settings.

\section{Related work}
\label{sec:related}
Given its importance, user-level privacy has been studied by several works in the last decade. 
One of the primary motivations for user-level privacy is federated learning, where the goal is to learn a model at the server while keeping the raw data on edge devices such as cell phones \citep{mcmahan17fedavg, kairouz2019advances}.  Ensuring privacy at the user level is a crucial concern in federated learning. Even though users do not send their original data, various works~\citep{phong2017privacy,wang2019beyond} have shown it is still possible to reconstruct user's data if additional privacy-preserving mechanisms are not used. Therefore, user-level privacy has been studied under various machine learning tasks in the federated learning setup~\citep{mcmahan2017learning, mcmahan2018general, augenstein2019generative}.  
Indeed, understanding the fundamental privacy-utility trade-offs under user-level privacy is one of the main challenges in federated learning~\citep[Section 4.3.2]{kairouz2019advances}. 

Several works studied fundamental theoretical problems in user-level private learning. \citet{ghazi2021user} studied PAC learnability under user-level privacy. \citet{levy2021learning} studied high-dimensional distribution estimation and optimization and designed efficient algorithms. Both works require i.i.d. data and assume a fixed number of samples across users. There are several recent works closely related to user-level private histogram estimation. \citet{amin2019bounding} studied the inherent bias and variance trade-off in bounding user contributions under user-level privacy for empirical risk minimization. Their analysis applies to estimating the total count of one symbol in the aggregate histogram. We extend their work to the setting of $d > 1 $ symbols.

\citet{Liu2020learning} and \citet{levy2021learning} studied a closely related problem of discrete distribution estimation and designed optimal algorithms in terms of user complexity (the minimum number of users required to learn an unknown distribution with given accuracy) up to logarithmic factors. However, their analysis assumes that all users' data are drawn from nearly identical distributions. Furthermore, the algorithms algorithms in \citet{Liu2020learning} may be impractical due to time inefficiency and large constants in user complexity. 

\citet{esfandiari2021tight} studied robust high dimensional mean estimation under user-level privacy, assuming i.i.d. and fixed number of samples across users. While their algorithm is robust to at most 49\% of the samples being arbitrarily adversarial, their result still requires that the remaining samples are independent and identically distributed.  
\citet{cummings2021mean} studied mean estimation of Bernoulli random variables, allowing different distributions and number of samples for different users. Their setup can be viewed as a special case of histogram estimation when the domain size is 2. However, no theoretical guarantee is provided when the domain size is larger than 2. \citet{wilson2019differentially} proposed differentially private SQL with bounded user contributions.

\citet{huang2021instance} provides an instance optimal algorithm for bounded-domain histogram estimation. However, it is unclear how their algorithm extends to the case of unbounded domains. In addition, we prove optimality against the best contribution bound in hindsight \textit{for any fixed dataset}, which is orthognonal to their formulation of neighborhood instance optimality. We discuss the differences and contributions compared to their work in more detail in subsequent sections.


\section{Problem formulation}
\label{sec:problem_formulation}

Differential privacy (DP) is studied in the central and local settings~\citep{CynthiaFKA06, kasiviswanathan2011can, duchi2013local}. In this paper, we study the problem under the lens of central differential privacy, where the goal is ensure the algorithm's outcomes do not reveal too much information about any user's data.  We now define differential privacy, starting with the basic definition of neighboring datasets. Here, we assume the number of users is known and fixed and hence we use the replacement notion of neighboring datasets.

\begin{definition}
Let $D=\{X_1, \ldots, X_n\}$ respresent a dataset of $n$ users. Each $X_i$ consists of $m_i$ samples $\{X_{i, j}\}_{j=1}^{m_i}$. Let $D'=\{X_i'\}_{i=1}^n$ be another dataset. We say $D$ and $D'$ are neighboring (or adjacent) datasets if for some $j\in[n]$, 
\[
X_i=X'_{i}, \text{for all } i\ne j.
\]
\end{definition}

\begin{definition}[Differential privacy]
A randomized mechanism $\mathcal{M}$ with  range $\mathcal{R}$ satisfies $(\eps, \delta)$-differential privacy if for any two adjacent datasets 
 $D, D'$ and for any subset of output 
 $\mathcal{S} \subseteq \mathcal{R}$, it holds that 
 \[
 Pr[\mathcal{M}(D) \in \mathcal{S}] \leq e^{\eps} Pr[\mathcal{M}(D') \in \mathcal{S}] + \delta.
 \]
 \end{definition}
 If $\delta=0$, then the privacy is also referred to as \textit{pure DP}, and for simplicity we say that the algorithm satisfies $\eps$-DP. If $\delta>0$, we refer to it as \textit{approximate DP}.

We consider the following problem. There are $n$ users and user $i$ has a histogram $N_i=(N_{i, 1}, \ldots, N_{i, d})\in\mathbb{Z}_{\geq 0}^d$ over a discrete domain of size $d\in\mathbb{N}$. Without loss of generality, we can assume the domain to be $[d]:=\{1, \ldots, d\}$. Let $m_i=\|N_i\|_1$ be the size of histogram $N_i$. The dataset $D$ is the collection of users' histograms. The goal is to estimate the population-level histogram, i.e., the sum of the histograms
$$\HistSum(D)=\sum_{i=1}^n N_i.$$

 

We make no assumptions about the distribution and size of each user's histogram.
Given an $(\varepsilon, \delta)$-differentially private algorithm 
whose output histogram is $\widehat{N}$, we characterize its performance by the expected $\ell_1$ distance between the algorithm output and the true population-level histogram
\[
\EE \| \HistSum - \widehat{N}\|_1 = \sum^d_{j=1} \EE |\HistSum_j - \widehat{N}_j|,
\]
where the expectation is over the randomization in the differential privacy algorithm. 


\section{Optimal user contribution for histograms over bounded domains}
\label{sec:estimation_no_assumption}
We first consider the problem of estimating the population-level histogram when the size of the domain $d$ is small enough to be enumerated. Let $\|\cdot\|_q$ denote the $\ell_q$ norm. For a vector $x\in\mathbb{R}^d$ and $C\in\mathbb{R}^+$, define the $\ell_q$ clipping function, $$
\clip_q(x, C)=\frac{C\cdot x}{\max(C, \|x\|_q)}.$$
A standard strategy is to \emph{clip} each user contribution either in $\ell_1$ (when $\delta=0$) or $\ell_2$ norm (when $\delta>0$) and add a suitable amount of Laplace or Gaussian noise respectively \cite{CynthiaFKA06,pmlr-v80-balle18a}. For completeness, the details are shown in Algorithm~\ref{alg-clipping}. 
In the rest of the paper, we use the term clipping and bounding user contribution interchangeably. 
When $\delta>0$, choosing an appropriate noise level $\sigma=\sigma(\eps, \delta)$ guarantees $(\eps, \delta)$-differential privacy.

\begin{algorithm}
\caption{Bounded domain histogram estimation}
\label{alg-clipping}
\begin{algorithmic}[1]
\STATE Input: histograms $N_1, \ldots, N_n$, clip threshold $C$, privacy parameter $\eps, \delta$, noise level $\sigma=\sigma(\eps, \delta)$.
\STATE Clipping: for each user $i$, do
\[
\widetilde{N}_i^{1} = \clip_1(N_i, C) \quad \text{ and } \quad \widetilde{N}_i^{2} = \clip_2(N_i, C) 
\]
\STATE If $\delta = 0$, return 
$
\widehat{N}_{L} = \sum^n_{i=1} \widetilde{N}^1_i + \text{Lap}(C/\eps)$.
\STATE If $\delta > 0$, return 
$
\widehat{N}_{G} = \sum^n_{i=1} \widetilde{N}^2_i + \text{N}(0, C^2\sigma^2\mathbb{I})$.
\end{algorithmic}
\end{algorithm}

\begin{lemma}
    \label{lem:dp-noise}[\cite{CynthiaFKA06,pmlr-v80-balle18a}]
    When $\delta=0$, Algorithm~\ref{alg-clipping} guarantees $\eps$-DP. When $\delta>0$. When $\eps\le1$, choosing $\sigma^2=2\log(1.32/\delta)/\eps^2$ guarantees $(\eps, \delta)$-DP for Algorithm~\ref{alg-clipping}. When $\eps>1$, choosing $\sigma=\alpha/\sqrt{2\eps}$ where $\alpha$ is defined in \citet[Algorithm 1]{pmlr-v80-balle18a} guarantees $(\eps, \delta)$-DP.
\end{lemma}

For $\delta>0$, it is noted that the expression for $\eps>1$ is much more complicated that $\eps\le1$, thus for simplicity we mainly focus on $\eps\le 1$.

\subsection{Selecting the optimal threshold non-privately}

There is a bias-variance trade-off in choosing the clipping threshold $C$. If $C$ is small, then the noise magnitude (or variance) is small, but the clipped histogram would have large error (or bias). On the other hand, if $C$ is large, the clipped histogram would be more accurate (less bias), but the added noise would be large (high variance). 
For any dataset $D$, we provide an accurate characterization of the best threshold that balances the bias and variance for both the Laplace and Gaussian estimator. For the Laplace estimator, the proof is similar to that of \citet{amin2019bounding} and is omitted.
\begin{lemma}
\label{lem:laplac-2opt}
Let $\mathcal{L}_L(C, D)=\EE[\|\widehat{N}_
L-\HistSum(D)\|_1]$. For any dataset $D$, choosing $C^*$ as the top $\lceil d/\eps \rceil$ element in $\{m_i\}_{i=1}^n$ yields 2-approximation
\[
\mathcal{L}_L(C^*, D)\le 2\inf_{C\ge 0}\mathcal{L}_L(C, D),
\]
where the expectation is over the Laplace mechanism.
\end{lemma}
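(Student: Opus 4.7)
The plan is to sandwich $\mathcal{L}_L(C,D)$ between $\max(h(C),g(C))$ and $h(C)+g(C)$, where $h(C):=\sum_{i=1}^{n}(m_i-C)^+$ and $g(C):=dC/\eps$ capture the $\ell_1$ clipping bias and the expected $\ell_1$ Laplace noise respectively. Since $\max(a,b)\le a+b\le 2\max(a,b)$, any $C$ that minimizes the upper bound $h+g$ is automatically a $2$-approximation of the minimizer of $\mathcal{L}_L(C,D)$, so the task reduces to establishing these two bounds and then verifying that the prescribed $C^{*}$ minimizes $h+g$.

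For the upper bound, decompose $\hat N_L-\bar N(D)=B+Z$ with $B:=\sum_i(\tilde N_i-N_i)$ deterministic and $Z$ a vector of i.i.d.\ $\Lap(C/\eps)$ coordinates. Because every $N_i$ is coordinatewise nonnegative and $\ell_1$-clipping only rescales it by the nonnegative factor $C/\max(C,m_i)$, every coordinate of $B$ has the same sign, which gives $\|B\|_1=\sum_i\|N_i-\tilde N_i\|_1=\sum_i(m_i-C)^+=h(C)$; the triangle inequality then yields $\mathcal{L}_L(C,D)\le h(C)+g(C)$. For the lower bound, $\EE Z=0$ and Jensen give $\mathcal{L}_L(C,D)\ge\|\EE(B+Z)\|_1=h(C)$, while the symmetry of each Laplace coordinate makes $b\mapsto\EE|b+Z_j|$ convex and even, hence minimized at $b=0$, so $\EE|B_j+Z_j|\ge\EE|Z_j|=C/\eps$ and therefore $\mathcal{L}_L(C,D)\ge g(C)$.

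It remains to show that $C^{*}=m_{(\lceil d/\eps\rceil)}$ minimizes $\phi(C):=h(C)+g(C)$, where $m_{(1)}\ge\cdots\ge m_{(n)}$ are the sorted histogram sizes. The function $\phi$ is continuous and piecewise linear with breakpoints at the $m_{(j)}$; on $(m_{(j+1)},m_{(j)})$ its slope equals $d/\eps-j$ (the $d/\eps$ from $g'$ and the $-j$ from the $j$ users whose $m_i$ still exceed $C$). Setting $k:=\lceil d/\eps\rceil$, this slope is strictly positive for $j<k$ and non-positive for $j\ge k$, so $\phi$ is non-increasing on $(-\infty,m_{(k)}]$ and non-decreasing on $[m_{(k)},\infty)$, placing a global minimum at $C^{*}=m_{(k)}$. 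Chaining everything, for any $C\ge 0$,
\[\mathcal{L}_L(C^{*},D)\le\phi(C^{*})\le\phi(C)\le 2\max(h(C),g(C))\le 2\mathcal{L}_L(C,D),\]
which is the desired $2$-approximation after taking the infimum over $C$.

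The main obstacle is more conceptual than technical and lies in the noise-side lower bound: Jensen alone only yields $\mathcal{L}_L\ge h$, which does not control $\phi=h+g$; one has to separately invoke Laplace symmetry to argue that \emph{noise cannot help cancel the bias on average} and obtain $\mathcal{L}_L\ge g$. Once both halves of $\max(h,g)\le\mathcal{L}_L\le h+g$ are in hand, the factor-$2$ slack is structural and the rest is elementary piecewise-linear optimization.
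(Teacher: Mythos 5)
Your proof is correct and follows essentially the same route as the paper's (the paper omits the Laplace case but gives the analogous Gaussian argument): sandwich the loss via $\tfrac12(h(C)+g(C))\le\mathcal{L}_L(C,D)\le h(C)+g(C)$ and then minimize the piecewise-linear upper bound $h+g$ at the top-$\lceil d/\eps\rceil$ quantile. The only cosmetic difference is in the lower bound, where the paper conditions on the event $Z_j<0$ (using that the clipping bias is coordinatewise one-signed) to get $\tfrac12(h+g)$ directly, whereas you derive $\max(h,g)$ from Jensen plus the symmetry/convexity of $b\mapsto\EE|b+Z_j|$ — both yield the same factor of $2$.
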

We now state the result for the Gaussian mechanism in Theorem~\ref{thm:gaussian-2opt}. The complete proof is in Appendix~\ref{sec:gaussian-2opt-proof}. 
\begin{theorem}
\label{thm:gaussian-2opt}
Let $\mathcal{L}_G(C, D)=\EE[\|\widehat{N}_
G-\HistSum(D)\|_1]$. Let $\eps \leq 1$ and  $M=d\sigma\sqrt{\frac{2}{\pi}}$. For any dataset $D$, choosing $C^*$ such that
\[
C^*=\arg\min_{C\ge 0}\left\{\sum_{i:\|N_i\|_2>C}\frac{\|N_i\|_1}{\|N_i\|_2}\le M\right\}
\]
yields 2-approximation, 
\[
\mathcal{L}_G(C^*, D)\le 2\inf_{C\ge 0}\mathcal{L}_G(C, D).
\]
\end{theorem}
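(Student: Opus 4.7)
The plan is to decompose $\mathcal{L}_G(C,D)$ into a deterministic clipping bias plus Gaussian noise, then sandwich it between two linear functions of $C$ that differ by a factor of $2$, with $C^*$ being exactly the minimizer of the upper linear function.

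\textbf{Step 1: An explicit form of the bias.} For each user $i$, set $B_i = N_i - \clip_2(N_i, C)$. Since $N_i \in \mathbb{Z}_{\ge 0}^d$ and $\ell_2$-clipping shrinks toward the origin, every entry of $B_i$ is non-negative, so $\|B_i\|_1 = \|N_i\|_1 \cdot \max\!\bigl(0,\, 1 - C/\|N_i\|_2\bigr)$. Writing $B(C) = \sum_i B_i$ and $Z \sim \mathcal{N}(0, \sigma^2 I_d)$ with $\sigma$ chosen so that $d\sigma\sqrt{2/\pi} = MC$, we get $\bar{N} - \hat{N}_G = B(C) - Z$. Triangle inequality then gives the upper bound
\[
\mathcal{L}_G(C, D) \le \|B(C)\|_1 + \EE\|Z\|_1 = \sum_{i:\|N_i\|_2 > C} \frac{\|N_i\|_1}{\|N_i\|_2}\bigl(\|N_i\|_2 - C\bigr) + MC =: U(C).
\]

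\textbf{Step 2: A matching lower bound up to a factor of 2.} Coordinatewise, $b \mapsto \EE|b - Z_j|$ is convex in $b$ and symmetric about $0$ (by symmetry of $Z_j$), so it is minimized at $b=0$, giving $\EE|B_j - Z_j| \ge \EE|Z_j| = \sigma\sqrt{2/\pi}$. Separately, Jensen's inequality gives $\EE|B_j - Z_j| \ge |\EE(B_j - Z_j)| = B_j$. Therefore
\[
\mathcal{L}_G(C, D) = \sum_{j=1}^d \EE|B_j - Z_j| \ge \max\!\bigl(\|B(C)\|_1,\, MC\bigr) \ge \tfrac{1}{2} U(C).
\]

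\textbf{Step 3: $C^*$ minimizes $U$.} The function $U(C)$ is piecewise linear and convex in $C$, with right-derivative $M - \sum_{i:\|N_i\|_2 > C} \|N_i\|_1/\|N_i\|_2$. The minimizer is the smallest $C$ at which this right-derivative becomes non-negative, which is precisely the $C^*$ defined in the statement. Combining the three steps, for every $C \ge 0$,
\[
\mathcal{L}_G(C^*, D) \;\le\; U(C^*) \;\le\; U(C) \;\le\; 2\,\mathcal{L}_G(C, D),
\]
and taking the infimum over $C$ finishes the proof.

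\textbf{Main obstacle.} The only non-routine ingredient is the coordinatewise lower bound $\EE|B_j - Z_j| \ge \max(B_j, \sigma\sqrt{2/\pi})$; the Jensen half is immediate but the noise-dominated half requires the convexity/symmetry argument above (equivalently, one can compute the folded-normal mean and use the standard inequality $u\,\erfc(u) \le e^{-u^2}/\sqrt{\pi}$). A secondary bookkeeping issue is verifying that the choice $\sigma = \sqrt{2\log(1.32/\delta)}\,C/\eps$ implicit in Algorithm~\ref{alg-clipping} gives $\EE\|Z\|_1 = MC$ for $M = 2\sqrt{\log(1.32/\delta)/\pi}\,d/\eps$, and that this $\sigma$ indeed suffices for $(\eps,\delta)$-DP under the assumption $\eps \le 1$; both are direct calculations with the classical Gaussian mechanism.
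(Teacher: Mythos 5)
Your proof is correct and follows essentially the same route as the paper: the same bias-plus-noise decomposition, the same convex upper bound $U(C)=G(C)$ minimized at $C^*$, and the same factor-$\tfrac12$ lower bound $\mathcal{L}_G(C,D)\ge\tfrac12 U(C)$. The only cosmetic difference is how that $\tfrac12$ is extracted: you take $\EE|B_j-Z_j|\ge\max\bigl(B_j,\EE|Z_j|\bigr)\ge\tfrac12\bigl(B_j+\EE|Z_j|\bigr)$ coordinatewise, whereas the paper conditions on the event $Z_j<0$ (on which the bias and noise have the same sign) and uses $\Pr(Z_j<0)=\tfrac12$; both yield exactly the same bound.
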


\subsection{Choosing the optimal threshold privately}
We now discuss how to find $C^*$ privately using an additional privacy budget of $(\eps', \delta')$,
and further provide complete guarantees in terms of excess error compared to $2\inf_{C\ge 0}\mathcal{L}_G(C, D)$. 
We emphasize that it only requires a very small extra privacy budget compared to the original $(\eps, \delta)$ to achieve good performance, as we will later show in the experiments. For $\delta=0$, one can levarage many existing algorithms to privately find the $d/\eps$ quantile in Lemma~\ref{lem:laplac-2opt} \citep[Theorem 2]{dick2023subset}. Thus we mainly focus on the $\delta>0$ case. 

Note that computing the optimal $C$ in a differentially private way is possible though difficult 
because the sensitivity of $\sum_{i: \|N_i\|_2 > C} \frac{\|N_i\|_1}{\|N_i\|_2}$ can be very large for some datasets. 
However, observe that by Cauchy-Schwarz inequality,
\[
\|N_i\|_1/\|N_i\|_2 \leq \sqrt{\|N_i\|_0}.
\]
Hence, if each user's histogram has very few non-zero entries, then the sensitivity would be low. 

We observe this to be the case in practice. 
To illustrate, we plot the unique number of symbols contributed by each user in Sentiment140 \citep{sentiment140} and SNAP datasets in Figure~\ref{fig:sparse}. 
Observe that even though $d$ is large, most users have fewer than $200$ samples. 
Hence, we assume that each user's histogram is at most $s$ sparse.
Under this assumption, the sensitivity is upper bounded by $\sqrt{s}$. Note that we can simply set $s=d$ if the bound is not known. 

\begin{figure}
    \ifnum\doublecolumn=1
        \centerline{\includegraphics[width=0.7\linewidth]{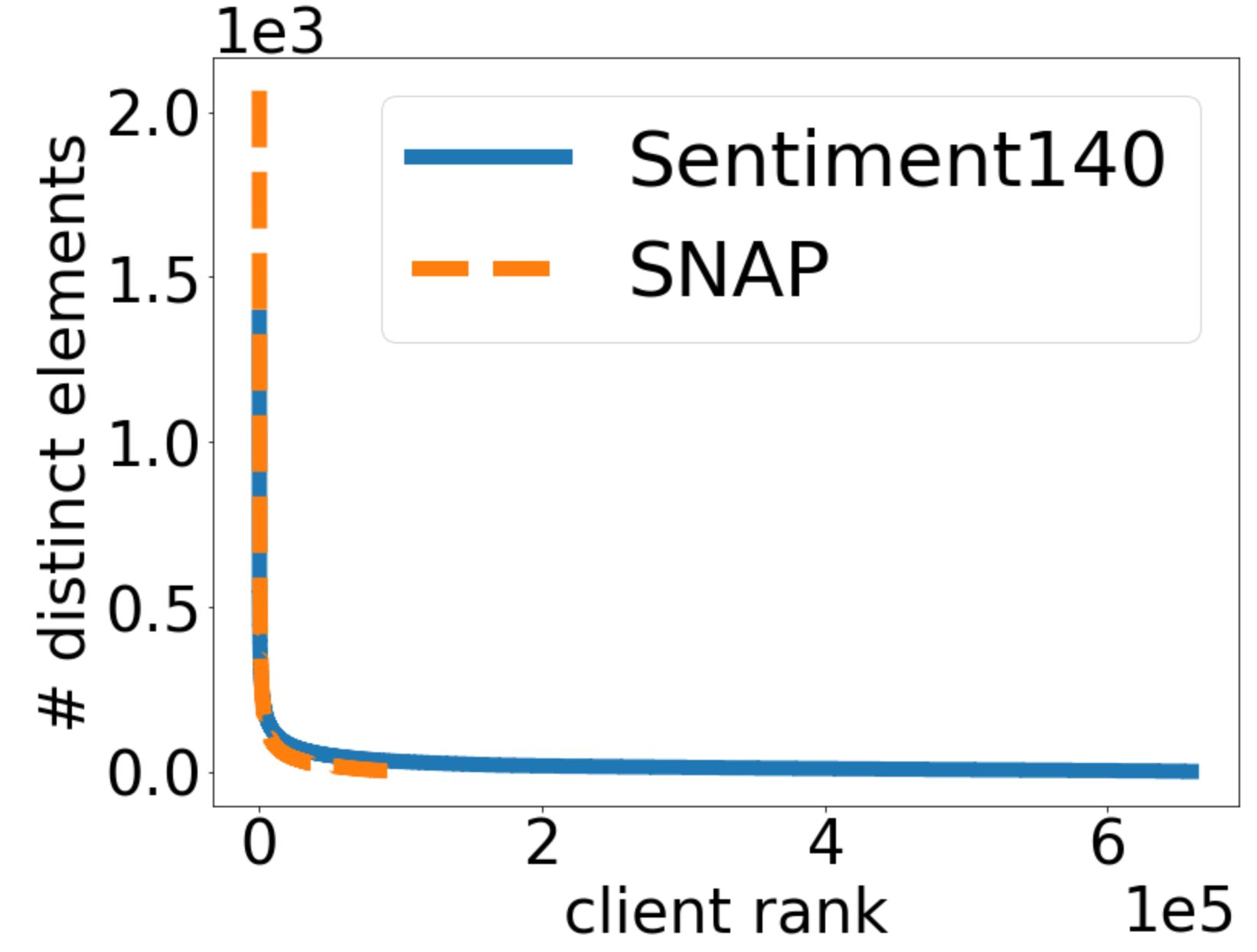}}
    \else
        \centerline{\includegraphics[width=0.4\linewidth]{sparse.png}}
    \fi
  \caption{Number of distinct words in each client. The x-axis is the rank of clients ordered from the highest count of distinct words to the lowest.}
  \label{fig:sparse}
\end{figure}

We first note that $C^*$ can be written as a minimizer to a convex function, 
\[
G(C) = \sum_{i=1}^nf_i(C)+C M,
\]
where $f_i(C) =\max\left\{1-\frac{C}{\|N_i\|_2}, 0\right\}\|N_i\|_1$.  Hence we can use techniques from differentially private convex optimization algorithms. We consider two such algorithms and provide their corresponding guarantees. 

\paragraph{Estimating $C^*$ with DP-SGD.}

We first consider the DP-SGD algorithm \citep[Algorithm 1]{bassily2014private} to estimate $C$ by minimizing $G(C)$. Using \citet[Theorem 2.4]{bassily2014private}, we have the following guarantee.
\begin{corollary}
\label{cor:bassily}
Let $\Cmax$ be an upper bound on $C^*$ and let $C_{\text{DPSGD}}$ be the output of  \citet[Algorithm 1]{bassily2014private}. Assume that $n\ge d$. Then,  $\EE[\mathcal{L}_G(C_{\text{DPSGD}}, D)]-2\inf_{C\ge 0}\mathcal{L}_G(C, D)
$ is upper bounded by 
\begin{align*}
O\left(\frac{\Cmax\Big(\sqrt{s}+\frac{\sqrt{\log(1/\delta)}}{\eps}\Big)}{\eps'}\log^{3/2}(n/\delta')\sqrt{\log(1/\delta')}\right).
\end{align*}
\end{corollary}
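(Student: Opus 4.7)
The plan is to reduce Corollary~\ref{cor:bassily} to the DP-SGD excess-risk guarantee of Bassily et al.~applied to the convex surrogate $G(C)$. The first step is to establish a two-sided sandwich
\[
\tfrac{1}{2}G(C) \le \mathcal{L}_G(C, D) \le G(C) \quad \text{for every } C\ge 0.
\]
The upper half is immediate from the triangle inequality applied to $\hat{N}_G - \bar{N}(D) = -\sum_i(N_i - \clip_2(N_i, C)) + Z$, together with $\|N_i - \clip_2(N_i, C)\|_1 = f_i(C)$ (using $N_i \ge 0$ coordinatewise) and $\EE[\|Z\|_1] = CM$. The lower half combines $\EE[\|a + Z\|_1] \ge \|a\|_1$ (Jensen on the mean-zero $Z$) with $\EE[\|a + Z\|_1] \ge \EE[\|Z\|_1]$ (symmetrize via $\|a+Z\|_1 + \|a-Z\|_1 \ge 2\|Z\|_1$), and then averages the two bounds. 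Since $C^*$ from Theorem~\ref{thm:gaussian-2opt} is exactly the minimizer of $G$ (its subgradient $M - \sum_{i:\|N_i\|_2 > C}\|N_i\|_1/\|N_i\|_2$ changes sign precisely at $C^*$), the sandwich yields $G(C^*) \le 2\inf_C \mathcal{L}_G(C, D)$, and therefore for any $\tilde{C}$,
\[
\mathcal{L}_G(\tilde{C}, D) - 2\inf_{C \ge 0} \mathcal{L}_G(C, D) \le G(\tilde{C}) - G(C^*).
\]

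The second step is to bound the right-hand side in expectation using DP-SGD on $G$. I would write $G = \sum_{i=1}^n \ell_i$ with per-user loss $\ell_i(C) = f_i(C) + CM/n$, which is convex and $L$-Lipschitz with
\[
L = \sup_i |\ell_i'(C)| \le \sup_i \frac{\|N_i\|_1}{\|N_i\|_2} + \frac{M}{n} \le \sqrt{s} + \frac{M}{n},
\]
by Cauchy--Schwarz and the $s$-sparsity assumption on each user's histogram. Running DP-SGD constrained to $[0, C_m] \ni C^*$ and invoking Bassily et al.~Theorem~2.4 in one dimension then gives
\[
\EE\bigl[G(\tilde{C}) - G(C^*)\bigr] = O\!\left(\frac{(\sqrt{s} + M/n)\, C_m\, \log^{3/2}(n/\delta')\,\sqrt{\log(1/\delta')}}{\eps'}\right).
\]
Finally, substituting $M = \Theta(\sqrt{\log(1/\delta)}\,d/\eps)$ and using the hypothesis $n \ge d$ to simplify $M/n = O(\sqrt{\log(1/\delta)}/\eps)$ delivers the claimed bound.

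The main obstacle is the lower half of the sandwich bound. The upper bound is a one-line triangle inequality, but the matching lower bound (which is what actually converts a 2-approximation guarantee on the non-convex quantity $\mathcal{L}_G$ into a pure convex minimization on the surrogate $G$, so that a generic DP optimizer can be applied) requires the two auxiliary Jensen/symmetrization inequalities and is the conceptual heart of the reduction. Once the sandwich is in hand, the rest is bookkeeping: verifying convexity and Lipschitz constant of $\ell_i$, quoting the Bassily et al.~guarantee, and simplifying under $n \ge d$.
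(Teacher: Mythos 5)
Your proposal is correct and follows essentially the same route as the paper: it reduces the excess loss to $G(\tilde{C})-G(C^*)$ via the two bounds $\mathcal{L}_G(C,D)\le G(C)$ and $\mathcal{L}_G(C,D)\ge \tfrac12 G(C)$ extracted from the proof of Theorem~\ref{thm:gaussian-2opt}, and then invokes the excess empirical risk guarantee of Bassily et al.\ in one dimension with Lipschitz constant $\sqrt{s}+M/n$ and domain radius $C_m$, simplifying $M/n$ using $n\ge d$. The only cosmetic difference is in the lower half of the sandwich: you use Jensen plus a symmetrization inequality and average the two, whereas the paper conditions on $\{Z_j<0\}$ and exploits that the clipping bias is coordinatewise nonpositive so the bias and noise magnitudes add exactly on that event; both arguments yield the same $\tfrac12 G(Q)$ bound.
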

\begin{proof}
The proof directly follows by noting that $G(C)=\sum_{i=1}^n(f_i(C)+\frac{MC}{n}) $. 
Hence the Lipschitz constant for $f_i(C)+MC/n$ w.r.t. $C$ is 
$L=\sqrt{s}+M/n=\sqrt{s}+O\left(\frac{\sqrt{\log(1/\delta)}}{\eps}\right)$.
Since $\Cmax\ge C^*$, setting the domain diameter $\|\mathcal{C}\|_2=\Cmax$ 
in \citet[Theorem 2.4]{bassily2014private} completes the proof.
\end{proof}

\paragraph{Estimating $C^*$ with output perturbation.}
We consider the second algorithm based on output perturbation \citep{chaudhuri2011differentially}, 
which ensures $(\eps', 0)$-DP and is good for small $n$ and $\eps'$. 
Here, we solve a regularized convex optimization problem and perturb the output to provide differential privacy. 
The algorithm is outlined in Algorithm~\ref{alg-convex}.

\begin{algorithm}[h]
\caption{Clipping threshold estimation with output perturbation}
\begin{algorithmic}[1]
\STATE Input: histograms $N_1, \ldots, N_n$, an upper bound of $C^*$ denoted by $\Cmax$, sparsity parameter $s$, privacy parameter $\eps'$.
\STATE Let $\lambda = \frac{2\sqrt{2s}}{\Cmax\sqrt{n\eps}}$ and $\Delta = \frac{4\sqrt{s}}{\lambda n}$ Compute $C'$, the minimizer of $F(C)=\frac{1}{n}G(C)+\frac{\lambda}{2}C^2.$
\STATE Return $C_{\text{output}} = C' + \text{Lap}(\Delta/\eps').$
\end{algorithmic}
\label{alg-convex}
\end{algorithm}

With appropriate parameters, the combined algorithm almost achieves a 2-approximation with respect to the best clipping threshold.
\begin{corollary}
\label{cor:hist}
Algorithm~\ref{alg-convex} is $(\eps', 0)$ differentially private. If $\Cmax$ is an upper bound on $C^*$,  setting $\lambda = \frac{2\sqrt{2s}}{\Cmax\sqrt{n\eps'}}$ yields an error
\[
\EE[\mathcal{L}_G(C_{\text{output}}, D)] \leq 2\inf_{C\ge 0}\mathcal{L}_G(C, D)
+ 
2 \sqrt{2}\,\Cmax \sqrt{\frac{ns}{\eps'}}.
\]
\end{corollary}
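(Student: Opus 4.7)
The plan is to establish the two parts of the corollary, privacy and utility, in sequence.

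For privacy, I follow the standard output-perturbation analysis of \citet{chaudhuri2011differentially}. Each summand $f_i$ is piecewise linear in $C$ with slope in $[-\|N_i\|_1/\|N_i\|_2,\,0]$, and by Cauchy-Schwarz $\|N_i\|_1/\|N_i\|_2 \le \sqrt{\|N_i\|_0} \le \sqrt{s}$, so $f_i$ is $\sqrt{s}$-Lipschitz. The objective $F$ is $\lambda$-strongly convex thanks to the $(\lambda/2)C^2$ regularizer, and replacing a single user changes $F$ by a function with Lipschitz norm at most $2\sqrt{s}/n$. The classical regularized-ERM sensitivity bound then shows that $C'$ has sensitivity at most $2\sqrt{s}/(n\lambda) \le \Delta$, so releasing $\hat C = C' + \mathrm{Lap}(\Delta/\epsilon')$ satisfies $(\epsilon', 0)$-DP via the Laplace mechanism.

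For utility, write $G(C) = B(C) + V(C)$ with clipping bias $B(C) = \sum_i f_i(C)$ and expected-noise contribution $V(C) = CM$. A triangle-inequality calculation (bias plus the expected $\ell_1$ norm of the Gaussian noise, which equals $CM$) gives $\mathcal{L}_G(C, D) \le G(C)$ for every $C \ge 0$, so $\EE[\mathcal{L}_G(\hat C, D)] \le \EE[G(\hat C)]$, which I split as
\begin{equation*}
\EE[G(\hat C)] - G(C^*) \;=\; \bigl(\EE[G(\hat C)] - G(C')\bigr) \;+\; \bigl(G(C') - G(C^*)\bigr).
\end{equation*}
The regularization gap is handled by optimality of $C'$ for $F$: the inequality $F(C') \le F(C^*)$ rearranges to $G(C') - G(C^*) \le (n\lambda/2)(C^*)^2 \le (n\lambda/2)C_m^2$. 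For the noise gap, I exploit that $V$ is linear and the Laplace perturbation $\xi = \hat C - C'$ has mean zero, so $\EE[V(\hat C)] = V(C')$ exactly; since $B$ is $n\sqrt{s}$-Lipschitz,
\[
\EE[G(\hat C) - G(C')] \;=\; \EE[B(\hat C) - B(C')] \;\le\; n\sqrt{s}\,\EE|\xi| \;=\; \tfrac{n\sqrt{s}\,\Delta}{\epsilon'}.
\]
Substituting $\lambda = 2\sqrt{2s}/(C_m\sqrt{n\epsilon'})$ and $\Delta = 4\sqrt{s}/(n\lambda)$ makes each of the two pieces equal to $C_m\sqrt{2ns/\epsilon'}$, summing to $2\sqrt{2}\,C_m\sqrt{ns/\epsilon'}$. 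Finally, applying Theorem~\ref{thm:gaussian-2opt} to replace $G(C^*)$ by $2\inf_C \mathcal{L}_G(C, D)$ finishes the proof.

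The main obstacle is the noise-gap bound. A naive Lipschitz bound on $G$ itself would introduce an extra $M\,\Delta/\epsilon'$ term whose $d/\epsilon$ dependence would destroy the advertised rate. The crucial observation that $V$ is exactly linear, so mean-zero Laplace perturbations cost nothing in expectation, is what isolates only the bias term's $n\sqrt{s}$ Lipschitz constant and yields the target $\sqrt{ns/\epsilon'}$ scaling.
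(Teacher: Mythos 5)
Your proof is correct and follows essentially the same route as the paper: the same decomposition of $\EE[G(\hat C)]-G(C^*)$ into a regularization gap (bounded by $\tfrac{n\lambda}{2}C_m^2$ via optimality of $C'$ for the regularized objective) and a noise gap (bounded by $n\sqrt{s}\,\EE|\xi|$ using the $\sqrt{s}$-Lipschitzness of each $f_i$ and the fact that the linear term $CM$ is unaffected in expectation by mean-zero Laplace noise), followed by the $G(C^*)\le 2\inf_C\mathcal{L}_G(C,D)$ bound from the proof of Theorem~\ref{thm:gaussian-2opt}. The only cosmetic difference is that you invoke the classical regularized-ERM sensitivity bound of $2\sqrt{s}/(n\lambda)$, whereas the paper derives the slightly looser $\Delta=4\sqrt{s}/(n\lambda)$ from a one-sided strong-convexity argument; both suffice for the stated $(\eps',0)$-DP guarantee and final error.
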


Comparing DP-SGD and Algorithm~\ref{alg-convex}, we can see that DP-SGD has a better asymptotic dependence on $n$, 
and Algorithm~\ref{alg-convex} has a better dependence on $\eps'$. Furthermore, DP-SGD provides an approximate DP guarantee and Algorithm~\ref{alg-convex} gives a pure DP guarantee. Finally, the time complexity of DP-SGD is typically $O(n^2)$, however it has been improved recently to $O(n)$ with similar guarantees \cite{feldman2020private}. 

\paragraph{Remark on the sparsity $s$.}  We emphasize that Lemma~\ref{lem:laplac-2opt} and Theorem~\ref{thm:gaussian-2opt} are general results that \textit{do not} require bounded 
 $\ell_0$ norms of histograms. Moreover, the convergence results in Corollary~\ref{cor:bassily} and \ref{cor:hist} follow directly by replacing 
 with $s$ with $d$ if the $\ell_0$ bound is not satisfied. Hence as long as 
 $n$ is sufficiently larger than $d$, the excess error is still small. Empirically, we show that even setting $s=d$ yields a performance close to the true 2-approximation threshold, and is much better than choosing the threshold according to \cite{amin2019bounding}. See Appendix~\ref{sec:s=d-experiment} for details.

\paragraph{Comparison with \citet{huang2021instance}} 
\cite{huang2021instance} proposed a clipping-based algorithm to very similar to ours to minimize $\ell_2$ estimation error. Their algorithm first applies $\ell_2$ clipping and then adds suitable amount of Gaussian noise. They further proved instance optimality over a neighborhood of $D$. However there are several key differences. 


\citet{huang2021instance} proved instance optimality against all differentially private algorithms over some \textit{neighborhood} of $D$, while our results indicate the optimality of all clipping algorithms for any \textit{fixed} dataset. Therefore, the results in \citet{huang2021instance} and our work are orthogonal to each other (in that neither result implies the other) and take different perspectives on the same problem. 

The technical difference is that they focused on minimizing the $\ell_2$ norm instead of the $\ell_1$ norm. As a result, their threshold is a quantile of the $\ell_2$ norms of all users' histograms, which is very different from the optimal threshold in Theorem~\ref{thm:gaussian-2opt}. Not surprisingly, as we will demonstrate in Section~\ref{sec:experiments}, their algorithm does not perform well when the error metric is $\ell_1$. 

\section{Optimal contribution for histograms over unbounded domains}
\label{sec:open-domain}
In the unbounded domain setting, the domain size can be prohibitively large or even infinite, so it is not practical to add noise to all items in the domain. We describe an algorithm for unbounded domain histogram estimation in Algorithm~\ref{alg:open-domain-algorithm} based on the sparse vector technique 
 \cite{dwork2009complexity}. Even though $d$ is very large, the run time of the algorithm depends only on the number of items with non-zero counts. However, in this approach, the privacy guarantee not only depends on the $\ell_1$ norm of the user contribution but also on the $\ell_\infty$ and $\ell_0$ norms. 
 
 While the standard $\ell_1$ clipping defined in the previous section reduces the $\ell_1$ norm, it does not reduce the $\ell_0$ norm of the histogram. Hence, we use the following randomized clipping strategy: for a histogram $N$ and integer $C>0$, let $\randClip(N, C)$ be the histogram obtained by sampling $\min\{\|N\|_1, C\}$ items without replacement. Since all histograms are integer-valued, 
 the $\ell_\infty$ norm of the clipped histogram is upper bounded by the $\ell_1$ norm.

 In this technique, each user first uses $\randClip$ to clip their histogram to ensure $\ell_1$ and $\ell_\infty$ norm to be less than $C$. Then an appropriate amount of Laplace noise is added to each non-zero count. Finally, we delete all items with counts less than a threshold $t$ and output the histogram of the remaining symbols and their noisy counts. 
 The privacy guarantee is stated in Lemma~\ref{thm:open-domain-alg-privacy}.
\begin{algorithm}[t]
    \caption{Unbounded domain histogram estimation}
    \label{alg:open-domain-algorithm}
    \begin{algorithmic}[1]
        \STATE Input: privacy parameters $\eps, \delta$, histograms $N_1, \ldots, N_n$, threshold $C$.
        \STATE $\HistSum=\sum_{i=1}^n N_i$.
        \STATE $t=C+\frac{C}{\eps}\log\frac{C}{2\delta}$.
        \STATE For each user $i$, $h_i=\randClip(N_i, C)$.
        \STATE $\widetilde{N} = \sum_{i=1}^nh_i+Z$ where $Z=[Z_j\mathbf{1}_{\HistSum_j>0}]_{j=1}^{d}$ and $Z_j\sim \Lap(C/\eps)$.
        \STATE Return $\widehat{N}$ where for each item $j\in[d]$ such that $\HistSum_j>0$,
        \[
        \widehat{N}_j=\widetilde{N}_{j}\indic{\widetilde{N}_j>t}.
        \]
    \end{algorithmic}
\end{algorithm}

\begin{lemma}
    Algorithm~\ref{alg:open-domain-algorithm} is $(\eps, \delta)$-diffentially private. 
    \label{thm:open-domain-alg-privacy}
\end{lemma}

\begin{proof}
Note that by random clipping, each $\|\bh_i\|_r\le C$ for $r=0, 1, 2, \infty$. By the guarantee of the Laplace mechanism, $\widetilde{N}$ is $(\eps, 0)$-DP. 
 Recall that the CDF of $Z$ is given by $\Phi(x)=1-\frac{1}{2}e^{-\eps x/C}$ for $x>0$. Thus,
\[
\Phi\left(\frac{C}{\eps}\log\frac{C}{2\delta}\right)=1-\frac{\delta}{C}\ge (1-\delta)^{1/C}.
\]
The final inequality is due to Bernoulli's inequality $(1+x)^r\le 1+rx$ for $x\ge-1$ and $r\in[0, 1]$. Instantiating \citet[Theorem 1]{delta2020github} proves the differential privacy guarantees. 
\end{proof}

Assume that $\Cmax$ is an upper bound on $\|N_i\|_1, i\in[n]$ which could potentially be very large. Then, we only need to focus on $C\le \Cmax$ (we are minimizing the error with respect to $C$, and it is common to assume some bound on optimization variables). In the next theorem we provide a tight characterization of the expected $\ell_1$ error of Algorithm~\ref{alg:open-domain-algorithm} up to logarithmic factors.
\begin{theorem}
\label{thm:open-domain-error-main} Assume that $1\le C \le \Cmax \le e^{\eps/(3\delta)}$, $\delta\le 1/8$, where $\Cmax$ is the maximum contribution of any user before clipping.
\ifnum\doublecolumn=1
\begin{align}
& \frac{1}{2} \sum_{i=1}^n\max\left\{ \|N_i\|_1 - C, 0\right\}\nonumber\\
& +  \frac{1}{ 12\log \frac{\Cmax}{2\delta}} \sum_{j:\bN_j>0} \EE[\min(\bh_j, t)]\leq  \mathbb{E}[\|\widehat{N} - \HistSum\|_1] \leq \nonumber
 \\
& 2 \sum_{i=1}^n\max\left\{ \|N_i\|_1 - C, 0\right\} +
 \sum_{j:\bN_j>0} \EE[\min(\bar{h}_j, t)].
 \label{equ:open-domain-upper-bound}
\end{align}
\else
\begin{align}
& \frac{1}{2} \sum_{i=1}^n\max\left\{ \|N_i\|_1 - C, 0\right\}+  \frac{1}{ 12\log \frac{\Cmax}{2\delta}} \sum_{j:\bN_j>0} \EE[\min(\bh_j, t)] \nonumber
 \\
& \leq  \mathbb{E}[\|\widehat{N} - \HistSum\|_1] \leq2 \sum_{i=1}^n\max\left\{ \|N_i\|_1 - C, 0\right\} +
 \sum_{j:\bN_j>0} \EE[\min(\bar{h}_j, t)].
 \label{equ:open-domain-upper-bound}
\end{align}
\fi
\end{theorem}

Details of the proof is in Appendix~\ref{sec:open-domain-proof}.
We argue that the assumption on $C, \eps$, and $\delta$ is very mild.
$\delta$ is set as $O(1/n)$ and $\eps$ is chosen to be a constant near 1 (say 0.5 to 5),
which implies that the upper bound on $C$ is exponential in $n$.

If $C^*$ minimizes the upper bound in Theorem~\ref{thm:open-domain-error-main}, 
then $C^*$ yields a logarithmic approximation. 
However, the upper bound in~\eqref{equ:open-domain-upper-bound} depends on $C$ directly via the $\sum_{i=1}^n\max\left\{ \|N_i\|_1 - C, 0\right\}$ 
and indirectly via randomly clipped histogram $\bar{h}_j$ and the threshold $t$. 
Furthermore, the expression is non-convex in $C$, thus convex optimization approaches may not yield provable guarantees. 
Hence, to privately estimate $C^*$, one can obtain the function values for all integers $0<C\le \Cmax$
and apply the exponential mechanism with an additional small privacy budget $\eps'$. 
\begin{corollary}
    Let $\mathcal{L}_O(C, D)$ be the expected $\ell_1$ error of Algorithm~\ref{alg:open-domain-algorithm} on dataset $D$ given threshold $C$. Let $\hat{C}$ be the output of the exponential mechanism with additional privacy budget $\eps'$. Then, $\mathcal{L}_O(\hat{C}, D)$ is upper bounded by
    \[
12\log\frac{\Cmax}{2\delta}\inf_{C\ge 1}\mathcal{L}_O(C, D)+\frac{6\Cmax\log \Cmax}{\eps'}.
    \]
\end{corollary}
\begin{proof}
    Let $C_{opt}$ be the threshold that minimizes the $\ell_1$ error. Write the lower and upper bounds in Theorem~\ref{thm:open-domain-error-main} as $L(C)$ and $U(C)$ respectively.
    Using the fact that $C^*$ minimizes \eqref{equ:open-domain-upper-bound} and applying Theorem~\ref{thm:open-domain-error-main}, 
    \begin{align*}
        U(C^*) &\le 12\log\frac{\Cmax}{2\delta}L(C^*)\le 12\log\frac{\Cmax}{2\delta}L(C_{opt})\\
        & \le 12\log\frac{\Cmax}{2\delta}\mathcal{L}_O(C_{opt}, D).
    \end{align*}
    Note that changing the data of one user changes~\eqref{equ:open-domain-upper-bound} by at most $3\Cmax$. By the utility of the exponential mechanism,
    \begin{align*}
        \mathcal{L}_O(\hat{C}, D)
        &\le U(\hat{C}) \le U(C^*) +\frac{6\Cmax\log \Cmax}{\eps'}\\
 &\le 12\log\frac{\Cmax}{2\delta}\mathcal{L}_O(C_{opt}, D)+\frac{6\Cmax\log \Cmax}{\eps'}.
\end{align*}
\end{proof}

In practice, the expectation in \eqref{equ:open-domain-upper-bound} can be hard to compute.
By Jensen's inequality, $$\EE[\min(\bh_j, t)]\le \min(\EE[\bh_j], t).$$ 
Hence, we propose to replace the former with $\min(\EE[\bh_j], t)$. Observe that 
\begin{align*}
\EE[\bh_j] 
&= \EE\left[\sum^n_{i=1} h_{i,j} \right]   = \sum^n_{i=1}\EE[h_{i,j} ]  = \sum^n_{i=1} \frac{CN_{ij}}{\max(C, \|N\|_1)}.
\end{align*}
Hence, 
\if\doublecolumn=1
\begin{align*}
       & V(C) = 2 \sum_{i=1}^n\max\left\{ \|N_i\|_1 - C, 0\right\} \\
       & + \sum_{j:\bN_j>0} \min\left(\sum_i \frac{CN_{ij}}{\max(C, \|N\|_1)}, C+ \frac{C}{\epsilon} \log \frac{C}{2\delta} \right).   
\end{align*}
\else
\begin{align*}
       V(C) = 2 \sum_{i=1}^n\max\left\{ \|N_i\|_1 - C, 0\right\} + \sum_{j:\bN_j>0} \min\left(\sum_i \frac{CN_{ij}}{\max(C, \|N\|_1)}, C+ \frac{C}{\epsilon} \log \frac{C}{2\delta} \right).   
\end{align*}
\fi

Searching over all possible (integer) values of $C$ can also be inefficient.
We can instead search over a subset $\mathcal{C}\subseteq[\Cmax]$. 
We describe the procedure to privately find the best threshold $C$ in Algorithm~\ref{alg:opt-threshold-open-domain}.
\begin{algorithm}
    \caption{Private threshold selection for histograms over unbounded domain}
    \label{alg:opt-threshold-open-domain}
    \begin{algorithmic}[1]
        \STATE Input: privacy parameters $\eps, \delta$ for Algorithm~\ref{alg:open-domain-algorithm}, privacy parameters $\eps', \delta'$ for estimating the optimal threshold, user histograms $N_1, \ldots, N_n$, $\Cmax$.
        \STATE Select a subset $\mathcal{C}$ of $\{1, \ldots, \Cmax\}$. 
        \STATE For each $C\in \mathcal{C}$, compute $V(C)$
        \STATE Return $\hat{C}$, the output of the exponential algorithm with privacy parameter $\eps'$ and sensitivity $5\Cmax/2$ over $\{V(C): C\in\mathcal{C}\}$. 
    \end{algorithmic}
\end{algorithm}

In Section~\ref{sec:experiments}, we empirically demonstrate that
Algorithm~\ref{alg:opt-threshold-open-domain} can also achieve an error very close to the true optimal threshold.
 
\section{Bias reduction}
\label{sec:bias_reduction}

We  prove that the bias from clipping can be significantly reduced 
when $N_i$'s satisfy some mild distribution assumptions
and show that the debiasing method provides improvements 
even on real datasets where these assumptions may not necessarily hold.
Consider the special case of $d=1$, which we refer to as \textit{count estimation}. 
Let $\mathcal{D}$ be a family of distributions over $\mathbb{Z}_{\geq 0}$.
For each user $i$, $N_i$ is drawn independently from some distribution in $\mathcal{D}$ with mean $\lambda_i> 0$. 
$\lambda_i$'s can be arbitrary and do not need to be equal.

In addition to the absolute error of counts $|\widehat{N}-\HistSum|$, we also want to characterize 
the accuracy for estimating the mean $\lambdaAvg=\frac{1}{n}\sum_{i=1}^n\lambda_i$. 
Let $\hat{\lambda}=\widehat{N}/n$ be an estimate of $\lambdaAvg$. We are interested in the expected square error
\[
\EE[(\lambdaAvg-\hat{\lambda})^2],
\]
where the expectation is over the randomness of the algorithm and the dataset.

In this work we set $\mathcal{D}$ to be the family of Poisson distributions since they arise in many applications. 
For example, they can be used to model the occurrences of a memoryless event in a fixed time window. 
Also, they are good approximations of the binomial distribution $\Bin(m, p)$
when $mp$ is a constant \citep{le1960approximation}, 
and can be very useful when estimating the count of one element 
in a histogram over a very large domain (e.g. the count of a particular word).

It is easy to see that clipping inevitably induces bias. In many practical situations,
it is often reasonable to make mild distribution assumptions on user data.
In this section, we ask if the clipping bias can be reduced with such assumptions. 
We answer affirmatively for bounded domain with $d=1$ under non-i.i.d. Poisson assumptions on each user's count. 

Our algorithm is shown in Algorithm~\ref{alg:poisson_single}. 
It essentially adds a post-processing procedure on the output of Algorithm~\ref{alg-clipping} to reduce the clipping bias. 
Since this is a post-processing step, it does not affect the privacy guarantees. 
We show a detailed analysis of the performance of Algorithm~\ref{alg:poisson_single} 
and discuss two possible extensions to high dimensions in the appendix. 

\begin{algorithm}
\caption{Debiasing algorithm for Poisson distribution}
\label{alg:poisson_single}
\begin{algorithmic}[1]
\STATE Input: $N_1, \ldots, N_n$, $\C\in \mathbb{N}$.
\STATE $h(\lambda)=\EE_{X\sim \Poi(\lambda)}[\clip(X, \C)]$
\STATE $Y_i=\clip(N_i, \C)$
\STATE Return $\widehat{N}=g\left(\sum_{i=1}^nY_i+Lap(\C/\eps)\right)$, where $g(y)=nh^{-1}(y/n)$
\end{algorithmic}
\end{algorithm}

\section{Experiments}
\label{sec:experiments}
\begin{figure*}[t]
\centering
    \includegraphics[width=0.35\textwidth]{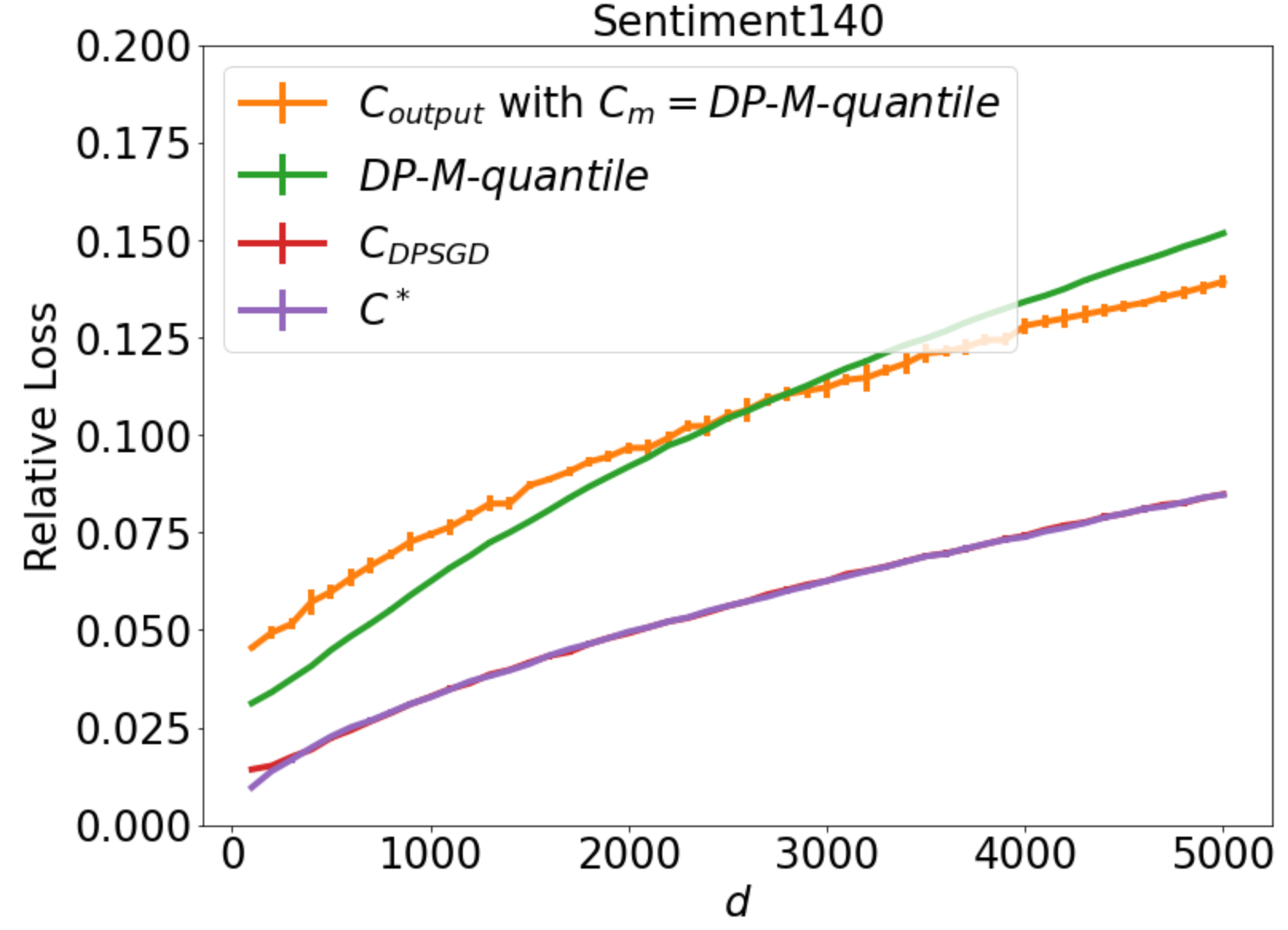}$\quad$
    \includegraphics[width=0.35\textwidth]{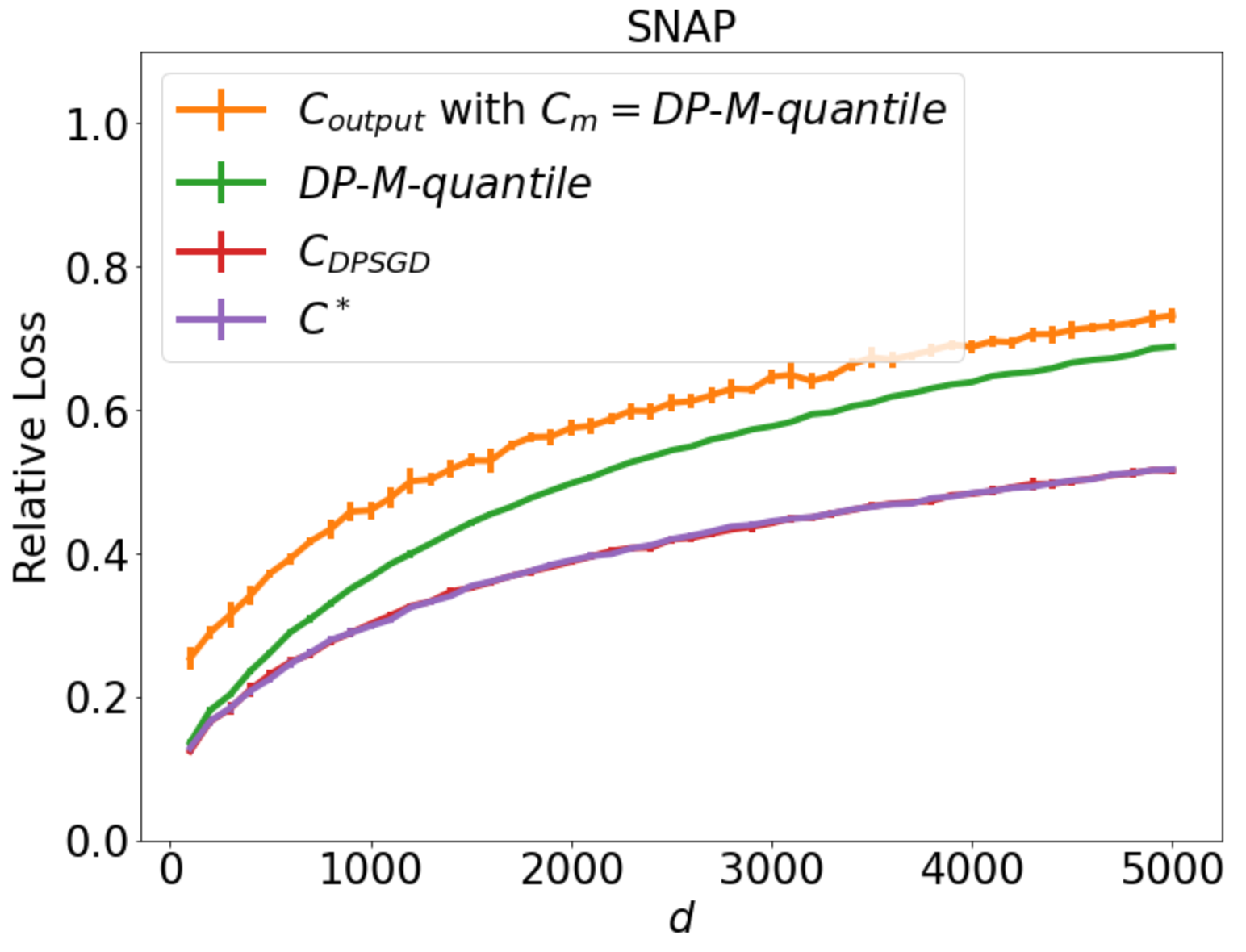}
  \caption{Histogram estimation over bounded domains. \textbf{Left:} Sentiment140 \textbf{Right:} SNAP.}
  \label{fig:hist-experiment}
\end{figure*}

\begin{figure*}[t]
    \centering
    \includegraphics[width=0.34\linewidth]{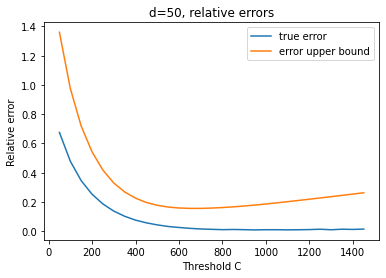} 
    \includegraphics[width=0.34\linewidth]{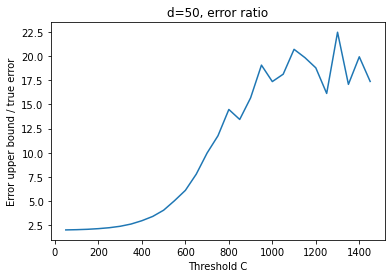}
    \includegraphics[width=0.34\linewidth]{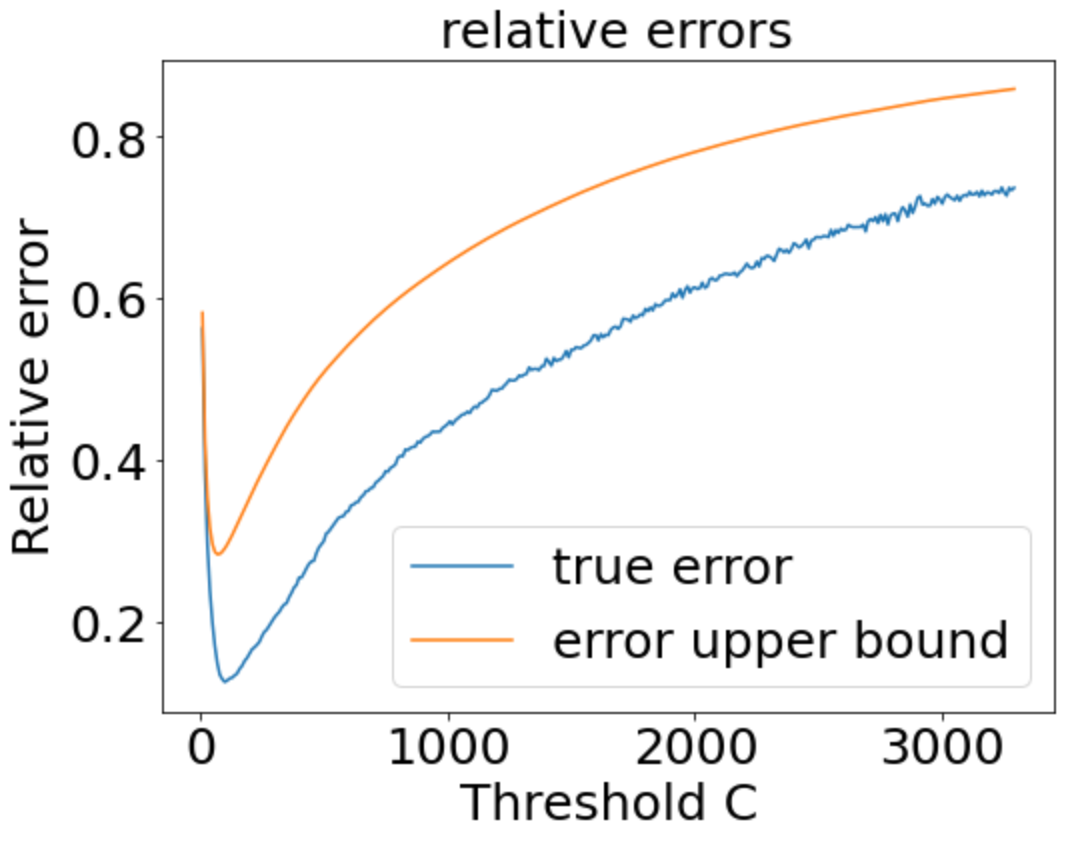} 
    \includegraphics[width=0.34\linewidth]{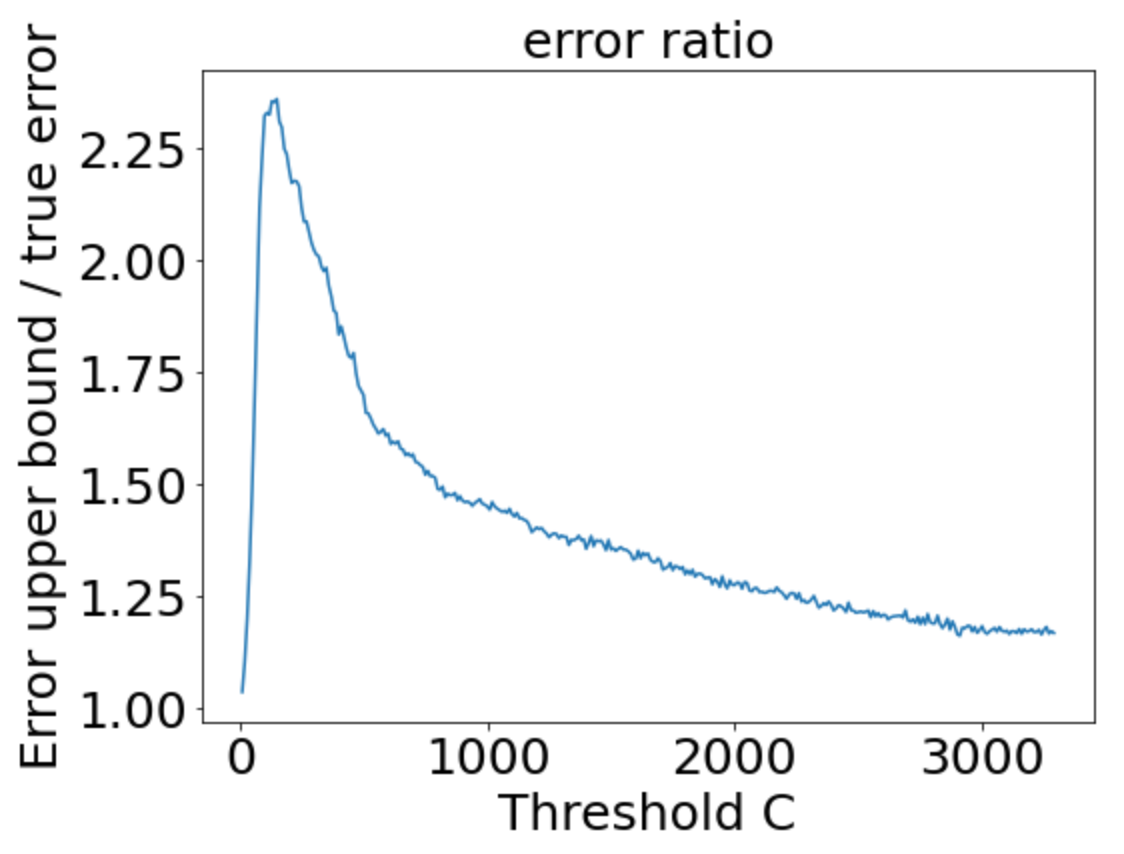}
    \caption{Error plots for different datasets for unbounded domains. \textbf{Top}: synthetic non-i.i.d data with $d=50$ (unknown). \textbf{Bottom:} Sentiment 140 with $d=1000$ (unknown). \textbf{Left column}: comparison of true error and predicted error upper bound. \textbf{Right column:} the orange/blue ratio in the left plots.}
    \label{fig:open-domain-loss}
\end{figure*}

\begin{table*}[t]
    \centering
        \def\arraystretch{1.13}
        \caption{Relative loss of the best threshold and Algorithm~\ref{alg:opt-threshold-open-domain} for unbounded domains.}
    \begin{tabular}{c c  c c c c }
    \hline
        Dataset & Support size $d$ (unknown) & Best $C$ & Private $C^*$ & Median & 90\% quantile\\
        \hline
        i.i.d. & $50$ & $0.001$ & $0.0015$ & 0.0397 & 0.0049 \\
        i.i.d. & $100$ & $0.0026$ &  $0.0026$& 0.0394 & 0.0053\\
        i.i.d. & $200$ & $0.0048$ &  $0.0048$ & 0.0397 & 0.0068\\\hline
        non-i.i.d. & $50$ & $0.0094$ & $0.0124$ &  0.6957 & 0.1645\\
        non-i.i.d.& $100$ & $0.0175$ &  $0.0224$ & 0.6957 & 0.1685\\
        non-i.i.d.& $200$ & $0.0461$ &  $0.0462$ & 0.7016 & 0.1713\\\hline
        Sent. 140& $100$ & $0.0076$ & $0.0285$ & 0.6027 & 0.2742\\
        Sent. 140& $1000$ & $0.1250$ & $0.1483$ & 0.5912 & 0.2699\\\hline
    \end{tabular}
    \label{tab:open-domain-performance}
\end{table*}

We run experiments on two real-world datasets: Sentiment140 \citep{sentiment140}, a twitter dataset that contains user tweets, and SNAP \citep{cho2011friendship}, a social network dataset that contains the location information of check-ins by users. For Sentiment140, we parse each user's tweets to words, and treat each word as an element. For SNAP, each element is a location, and each user has check-ins to multiple locations in the dataset. 

Since running on all elements (an order of $10^6$) is costly and and the error is ususally prohibitively large, we choose the top $d$ elements in the datasets and only run experiments on those. We measure the relative loss of $\widehat{N}$, 
\begin{equation}
    \frac{\sum^d_{j=1} |\HistSum_j - \widehat{N}_j|} { \|\HistSum\|_1}.
    \label{equ:relative-loss}
\end{equation}

\subsection{Bounded domain}

In all experiments, the privacy budget for estimating $C$ is $\varepsilon=0.1, \delta=1/2n$, and the budget for Algorithm~\ref{alg-clipping} is $\varepsilon=1, \delta=1/2n$. For DP-SGD with sparsity assumptions, we set $s = 0.1d$\footnote{The choice $s =0.1d$ is arbitrary (i.e. not a function of the underlying datasets) and has not been tuned.} and clip each $\|N_i\|_1/\|N_i\|_2$ to $\sqrt{s}$ when estimating $C^*$. This introduces bias when the assumption is not satisfied for some users. However if the percentage of such users is small, this effect can be negligible.

We evaluate different algorithms for estimating the clipping threshold $C$ for the Gaussian mechanism given in Algorithm~\ref{alg-clipping}.
We compare the performance of the following methods: $(i)$ $C^*$: The non-private clipping threshold   given in Theorem~\ref{thm:gaussian-2opt}. $(ii)$ \emph{DP-$M$-quantile}: 
inspired by the 2-approximation quantile in \citep{amin2019bounding}, we set $C$ to be
the $M^{th}$ largest value of $\|N_i\|_2$, where $M$ is given in Theorem~\ref{thm:gaussian-2opt}. We estimate it by gradient descent with differential privacy, e.g. \citet[Section 2]{andrew2021differentially}. This corresponds to a slightly different private version of the clipped-mean estimator in \citet[Section 3]{huang2021instance}. $(iii)$ $C_{\text{DPSGD}}$: estimation of $C^*$ with DPSGD algorithm (Corollary~\ref{cor:bassily}). and $(iv)$ $C_{\text{output}}$: estimation of $C^*$ with output pertubation (Algorithm~\ref{alg-convex}).


In Figure \ref{fig:hist-experiment}, we show the comparison of these threshold estimation algorithms with different choices of $d$ in $[100, 5000]$. The results with both datasets are similar, but SNAP has much higher errors, possibly because of the location information in SNAP is more non-i.i.d compared to the words in Sentiment140. Setting $C$ to $\emph{DP-$M$-quantile}$ according to \citep{amin2019bounding} and \cite{huang2021instance} does not have any theoretical support, and the errors are relatively high. For Algorithm~\ref{alg-convex}, we run experiments with $\Cmax = \emph{\text{DP-$M$-quantile}}$ and $\Cmax = 150$ (see Appendix~\ref{sec:cm15-experiment}). Of all the algorithms, $C_{\text{DPSGD}}$ has similar performance to the true $C^*$  without differential privacy.

\subsection{Unbounded domain}
In this section, we run experiments for Algorithm~\ref{alg:open-domain-algorithm} with threshold $C$ chosen by Algorithm~\ref{alg:opt-threshold-open-domain}.  We tested on both real and synthetic datasets. We set $\eps=1, \eps'=0.1$ and $\delta=1/(2n)$ where $n$ is the number of users in the respective datasets. 
We compared our algorithms to two non-private baselines where $C$ is the median and the 90\% quantile of the $\ell_1$ norms.  In this section $d$ should be interpreted as the actual support size of the aggregate histogram that is not known beforehand.  Our algorithm \textit{ does not} require prior knowledge of $d$. 

For Sentiment 140 we choose $d=100$ and $1000$ words and treat those as the support of the histograms. We generated synthetic datasets with $n=5\times 10^5$ users with both i.i.d. and non-i.i.d. data over support sizes $d=50$, $100$, or $200$.  Let $\bp$ be a discrete distribution over $[d]$ with probability mass proportional to $1/(j+50)$ for $j\in[d]$. In the i.i.d. setting, each user draws $\Poi(100)$ samples from $\bp$.  In the non-i.i.d. setting, let $\lambda_1, \ldots, \lambda_i\sim 100\Dir(2)$,  user $i$ draws $\Poi(\lambda_i)$ samples from $\bp_i\sim\Dir(\bp/2)$.  

We search for the best $C$ over $[10, 20,\ldots, 1500]$.
Figure~\ref{fig:open-domain-loss} compares the average error of Algorithm~\ref{alg:open-domain-algorithm} over 3 independent runs (blue) and the error upper bound in Algorithm~\ref{alg:opt-threshold-open-domain} (orange) on the non-i.i.d. synthetic dataset with $d=50$. We can see that~\eqref{equ:open-domain-upper-bound} indeed upper bounds the expected error, and their ratio is within $O(\log(\Cmax/2\delta))$.

Table~\ref{tab:open-domain-performance} compares the performance of the best $C$ (obtained non-privately) and the private estimate obtained by Algorithm~\ref{alg:opt-threshold-open-domain}. The performance is measured by relative loss defined by~\eqref{equ:relative-loss}. We can see that for all datasets, the private estimate is close to the performance achieved by the best threshold, and significantly outperforms the non-privately chosen median and 90\% quantiles. This further suggests the need to threshold according to the dataset instead of choosing a fixed threshold or quantile. Furthermore, focusing on the results for Sentiment 140 with $d=1000$ (i.e. the bottom row of Figure~\ref{fig:open-domain-loss} and Table~\ref{tab:open-domain-performance}), we observe that our algorithm yields good performance even when $C_m$ significantly overshoots the true optimal threshold, which demonstrates robustness against the choice of $C_m$.

\section{Conclusion}
\label{sec:conclusion}

We studied histogram estimation under user-level differential privacy in the heterogeneous scenario for bounded and unbounded domains. We proposed algorithms to choose the best user contribution bound that achieve 2-approximation and logarithmic approximation for bounded and unbounded domains respectively. We also showed that clipping bias introduced by bounding user contribution may be reduced under distribution assumptions. Finally, we empirically demonstrated the practicality of the proposed methods. 

\section{Acknowledgements}

The authors thank Alex Kulesza for helpful comments and discussions.

\newpage

\bibliographystyle{icml2023}
\bibliography{refs}

\newpage

\appendix
\onecolumn
\section{Detailed proof for bounded domain algorithms}

\subsection{Proof of Theorem~\ref{thm:gaussian-2opt}}
\label{sec:gaussian-2opt-proof}
\begin{proof}
Recall that $\sigma$ is a function of $\eps, \delta$ and  $M=d\sigma\sqrt{\frac{2}{\pi}}$. We can upper bound the error as follows. 
\begin{align}
\mathbb{E}[\|\widehat{N} - \HistSum\|_1] &= \mathbb{E}\left[\left\|\sum_{i=1}^n \text{clip}_2(N_i, C) +\mathcal{N}(0,\mathbb{I}C^2\sigma^2) - \sum_{i=1}^n N_i \right\|_1\right] \nonumber \\ 
& \leq  \mathbb{E}\left[\left\|\sum_i \text{clip}_2(N_i, C)  - \sum_{i=1}^n N_i \right\|_1\right]  + \mathbb{E}[\|\mathcal{N}(0,\mathbb{I}C^2\sigma^2)\|_1] \nonumber \\ 
&=   \sum_{i: ||N_i||_2 > C} \left(1 - \frac{C}{||N_i||_2} \right) ||N_i||_1 + C \cdot M  \nonumber\\ 
&=\sum_{i=1}^n\max\left\{ 1 - \frac{C}{||N_i||_2}, 0\right\} ||N_i||_1  + C \cdot M = G(C).
\label{eq:upper_bound}
\end{align} 
Equation~\ref{eq:upper_bound} is convex with respect to $C$. To optimize the upper bound on the error, we will take the sub-derivative with respect to $C$ and set it to zero. This gives us the following equation
\begin{equation}
\label{eq:C}
\sum_{i: ||N_i||_2 > C} \frac{||N_i||_1}{||N_i||_2} = M.
\end{equation} 
Roughly we want to choose $C$ that satisfies the above equality. The precise value of $C^*$ is 
\[
C^*=\arg\min_{C\ge 0}\left\{\sum_{i:\|N_i\|_2>C}\frac{\|N_i\|_1}{\|N_i\|_2}\le M\right\}
\]
$C^*$ minimizes the right hand side of~\eqref{eq:upper_bound}, and it also makes the expected $\ell_1$ loss at most twice the loss of the optimal loss with this algorithm. Formally, suppose $Q$ is the $\ell_2$-norm that minimizes $\mathbb{E}[\|\widehat{N} - \HistSum\|_1]$. Let $\mathcal{Z}=[Z_1, \ldots, Z_d] \sim \mathcal{N}(0,I\sigma^2)$ and $\clip_2(N_i, Q)_j$ be the $j$ the coordinate of $\clip_2(N_i, Q)$, then:

\begin{align*}
& \mathbb{E}\left[\left\|\sum_i \text{clip}_2(N_i, Q) +\mathcal{Z}- \sum_i N_i \right\|_1\right] =\sum_{j=1}^d\mathbb{E}\left[\left|\text{clip}_2(N_i, Q)_j-N_{i, j}+Z_j\right|\right]\\
= & \sum_{j=1}^d\mathbb{E}\left[\left|\sum_i \text{clip}_2(N_i, Q)_j + Z_j - \sum_i N_{i, j} \right| \bigg| Z_j < 0 \right] \cdot \Pr(Z_j < 0)
\\
\quad&+ \sum_{j=1}^d\mathbb{E}\left[\left|\sum_i \text{clip}_2(N_i, Q)_j + Z_j - \sum_i N_{i, j} \right| \bigg| Z_j \ge 0 \right] \cdot \Pr(Z_j \ge 0)\\
\ge & \sum_{j=1}^d\mathbb{E}\left[\left|\sum_i \text{clip}_2(N_i, Q)_j + Z_j - \sum_i N_{i, j} \right| \bigg| Z_j < 0 \right] \cdot \Pr(Z_j < 0)\\
= & \frac{1}{2}\sum_{j=1}^d\mathbb{E}\left[\left|\sum_i \text{clip}_2(N_i, Q)_j + Z_j - \sum_i N_{i, j} \right| \bigg| Z_j < 0 \right] \\
= & \frac{1}{2} \left(Q \cdot M + \sum_{i: ||N_i||_2 > Q} \left(1 - \frac{Q}{||N_i||_2}\right) ||N_i||_1\right) \\
\ge & \frac{1}{2} \left(C^* \cdot M + \sum_{i: ||N_i||_2 > C^*} \left(1 - \frac{C^*}{||N_i||_2} \right) ||N_i||_1\right)=\frac{1}{2}G(C^*).
\end{align*}
This shows that $C^*$ yields a 2-approximation.
\end{proof}

\subsection{Proof of Corollary~\ref{cor:hist}}
To estimate $C^*$ privately, one can use the output perturbation algorithm. For ease of analysis we consider the regularized problem. More precisely, let 
$$f_i(C)=\max\{1-C/\|N_i\|_2, 0\}\|N_i\|_1.$$
Note that $f_i$ is $L$-Lipschitz where $L=\sqrt{d}$. The goal is to minimize the following function
\begin{equation}
    F_1(C)=\frac{1}{n}\sum_{i=1}^nf_i(C)+\frac{CM}{n}+\frac{\lambda}{2}C^2. 
\end{equation}
Let $C_1^*=\arg\min_{C\ge 0}F_1(C)$. We first compute the sensitivity of $C_1^*$ as a function of the dataset.

We first compute the sensitivity of $C'$. Consider a pair of neighboring datasets $D$ and $D'$ which only differ by the $n$th user. 
\begin{lemma}
\label{lem:th-sensitivity}
Let $N_n'$ be a histogram and $f_n'(C)$ defined similarly as $f_n(C)$ with $N_n$ replaced by $N_n'$. Let $ F_1(C)=\frac{1}{n}\sum_{i=1}^nf_i(C)+\frac{CM}{n}+\frac{\lambda}{2}C^2$ and 
$F_2(C)=\frac{1}{n}\sum_{i=1}^{n-1}f_i(C)+\frac{1}{n}f_n'(C)+\frac{CM}{n}+\frac{\lambda}{2}C^2$. Let $C_1^*=\arg\min_{C\ge 0} F_2(C)$ and $C_2^*=\arg\min_{C\ge 0} F_2(C)$. Then, 
\[
    |C_1^*-C_2^*|\le \Delta:=\frac{4\sqrt{s}}{\lambda n}, 
\]
\end{lemma}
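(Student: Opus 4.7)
My plan is to exploit strong convexity of $F_1$ and $F_2$. Each $f_i$ is convex (a clipped affine function scaled by a positive constant), and the quadratic $\frac{\lambda}{2} C^2$ added to both $F_1$ and $F_2$ makes each of them $\lambda$-strongly convex on the nonnegative reals. Applying the standard strong convexity bound at the respective minimizers $C_1^*$ and $C_2^*$ yields
\begin{align*}
F_1(C_2^*) - F_1(C_1^*) &\geq \tfrac{\lambda}{2}(C_1^* - C_2^*)^2, \\
F_2(C_1^*) - F_2(C_2^*) &\geq \tfrac{\lambda}{2}(C_1^* - C_2^*)^2.
\end{align*}
(If $C_1^*$ or $C_2^*$ lies on the boundary $C=0$, the same inequality follows by taking subgradients together with the first-order optimality condition for constrained minimization.)

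Adding the two inequalities and observing that $F_1$ and $F_2$ agree on all terms except the contribution of the $n$th user, the left side collapses to
\[
\frac{1}{n}\bigl(f_n(C_2^*) - f_n(C_1^*)\bigr) + \frac{1}{n}\bigl(f_n'(C_1^*) - f_n'(C_2^*)\bigr) \;\geq\; \lambda\,(C_1^*-C_2^*)^2.
\]
Now I would bound each of the two differences on the left by a Lipschitz argument. Under the $s$-sparsity assumption, Cauchy--Schwarz gives $\|N_i\|_1/\|N_i\|_2 \leq \sqrt{\|N_i\|_0} \leq \sqrt{s}$, so every $f_i$ has $|f_i'(C)| \leq \sqrt{s}$ wherever it is differentiable and is therefore $\sqrt{s}$-Lipschitz on $[0,\infty)$. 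The same bound applies to $f_n'$ since $N_n'$ is also $s$-sparse. Hence the left side is at most $\frac{2\sqrt{s}}{n}|C_1^* - C_2^*|$, and dividing through by $|C_1^* - C_2^*|$ (which we may assume nonzero) yields
\[
|C_1^* - C_2^*| \;\leq\; \frac{2\sqrt{s}}{\lambda n},
\]
which is bounded by the stated $\Delta = \frac{4\sqrt{s}}{\lambda n}$.

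The main obstacle I expect is a bookkeeping one rather than a conceptual one: making sure the Lipschitz constant is taken with respect to the correct sparsity bound (the paper's $s$ assumption, not $\sqrt{d}$) and that the boundary case $C^* = 0$ is handled via the normal cone instead of a pointwise derivative. Once strong convexity and the Lipschitz bound are in place, the rest is a two-line inequality and the sensitivity bound $\Delta$ follows.
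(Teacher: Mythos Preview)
Your proof is correct and follows essentially the same approach as the paper: combine $\lambda$-strong convexity of the regularized objective with the $\sqrt{s}$-Lipschitz property of each $f_i$ (via $\|N_i\|_1/\|N_i\|_2\le\sqrt{s}$). The only difference is that you symmetrize by applying the strong-convexity inequality to \emph{both} $F_1$ and $F_2$ and add them, whereas the paper applies it once to $F_1$ alone and bounds $n(F_1(C_2^*)-F_1(C_1^*))$ above by $2\sqrt{s}\,|C_1^*-C_2^*|$ using $F_2(C_2^*)\le F_2(C_1^*)$. Your symmetrized version in fact yields the sharper constant $\frac{2\sqrt{s}}{\lambda n}$, half the paper's stated $\Delta=\frac{4\sqrt{s}}{\lambda n}$; either way the lemma holds.
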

\begin{proof}
Observe that $f_i(C)$ is $\sqrt{s}$ Lipschitz. Let $L = \sqrt{s}$.
\begin{align*}
    &\quad n(F_1(C_2^*)-F_1(C_1^*))\\
    &=\sum_{i=1}^nf_i(C_2^*)-\sum_{i=1}^nf_i(C_1^*)+M(C_2^*-C_1^*)+\frac{n\lambda}{2}((C_2^*)^2-(C_1^*)^2)\\
    &=\sum_{i=1}^{n-1}f_i(C_2^*)-\sum_{i=1}^{n-1}f_i(C_1^*)+M(C_2^*-C_1^*) +\frac{n\lambda}{2}((C_2^*)^2-(C_1^*)^2) +f_n(C_2^*)-f_n(C_1^*) \\
    &=n(F_2(C_2^*)-F_2(C_1^*))+f_n(C_2^*)-f_n(C_1^*) -(f_n'(C_2^*)-f_n'(C_1^*))\\ 
    &\le |f_n(C_2^*)-f_n(C_1^*)|+|f_n'(C_2^*)-f_n'(C_1^*)| \\
    &\le 2L|C_2^*-C_1^*| 
\end{align*}
Since $F_1$ is $\lambda$-strongly convex, we have
\[
F_1(C_2^*)-F_1(C_1*)\ge\frac{\lambda}{2}|C_2^*-C_1^*|^2
\]
Combining the two parts, 
\[
|C_2^*-C_1^*|\le\frac{4L}{\lambda n}
\]
\end{proof}

Now we can characterize performance of the combined algorithm which uses the output of Algorithm~\ref{alg-convex}, $\hat{C}$, as the clipping threshold in Algorithm~\ref{alg-clipping}.

\begin{lemma} 
\label{lem:hist-error}
Let $C_m$ be an upper bound on $C^*$. Then
\[
\EE[\mathcal{L}_G(\hat{C}, D)]-2\inf_{C\ge 0}\mathcal{L}_G(C, D)\le \frac{n\lambda C_m^2}{2}+\frac{4s}{\lambda \eps'}.
\]
\end{lemma}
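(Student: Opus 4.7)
The strategy is to convert the comparison with $2\inf_{C\ge 0}\mathcal{L}_G(C,D)$ into a comparison involving the surrogate $G$, which the proof of Theorem~\ref{thm:gaussian-2opt} has already bounded on both sides. Concretely, the pointwise inequality $\mathcal{L}_G(C,D)\le G(C)$ (shown in the derivation of equation~\eqref{eq:upper_bound}) and the matching lower bound $G(C^*)\le 2\inf_{C\ge 0}\mathcal{L}_G(C,D)$ (shown in the same proof) together give $\EE[\mathcal{L}_G(\hat{C},D)] - 2\inf_{C\ge 0}\mathcal{L}_G(C,D)\le \EE[G(\hat{C})] - G(C^*)$. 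I then split
\begin{equation*}
\EE[G(\hat{C})] - G(C^*) \;=\; \bigl(G(C') - G(C^*)\bigr) \;+\; \bigl(\EE[G(\hat{C})] - G(C')\bigr),
\end{equation*}
and bound each piece separately.

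\textbf{First piece.} Since $C'$ minimizes $F(C)=\frac{1}{n}G(C)+\frac{\lambda}{2}C^2$ over $C\ge 0$, we have $F(C')\le F(C^*)$. Rearranging gives $G(C')-G(C^*)\le \frac{n\lambda}{2}\bigl((C^*)^2-(C')^2\bigr)\le \frac{n\lambda}{2}(C^*)^2 \le \frac{n\lambda C_m^2}{2}$, using the assumption $C^*\le C_m$.

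\textbf{Second piece.} Decompose $G(C) = H(C)+C M$ with $H(C)=\sum_{i=1}^n f_i(C)$. Because $\hat{C}-C'\sim \text{Lap}(\Delta/\eps')$ is mean-zero, the linear term contributes $\EE[M(\hat{C}-C')]=0$, so
\begin{equation*}
\EE[G(\hat{C})]-G(C') \;=\; \EE\bigl[H(\hat{C})-H(C')\bigr] \;\le\; L_H\cdot \EE|\hat{C}-C'|.
\end{equation*}
Each $f_i$ is piecewise linear with slope of absolute value $\|N_i\|_1/\|N_i\|_2$ on $[0,\|N_i\|_2)$ and $0$ afterwards, so by Cauchy--Schwarz and $s$-sparsity, $\|N_i\|_1/\|N_i\|_2\le\sqrt{\|N_i\|_0}\le\sqrt{s}$. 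Hence $H$ is $n\sqrt{s}$-Lipschitz. Using $\EE|\text{Lap}(b)|=b$ and $\Delta=\frac{4\sqrt{s}}{\lambda n}$ yields $\EE[G(\hat{C})]-G(C') \le n\sqrt{s}\cdot \frac{\Delta}{\eps'}=\frac{4s}{\lambda\eps'}$, and adding the two pieces completes the bound.

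\textbf{Main obstacle.} The delicate point is that a naive Lipschitz bound on all of $G$ would include the slope of the linear penalty $CM$ and produce a spurious $M\Delta/\eps'$ term; separating off the linear part and using $\EE[\text{Lap}(\Delta/\eps')]=0$ is what makes the second piece collapse to $\frac{4s}{\lambda\eps'}$. A minor technical wrinkle is that $\hat{C}=C'+\text{Lap}(\Delta/\eps')$ can be negative, which is outside the intended domain of $G$; this can be handled by projecting $\hat{C}$ onto $[0,\infty)$ (projection is a $1$-Lipschitz postprocessing and does not increase $\EE|\hat{C}-C'|$) or by extending $G$ convexly to all of $\mathbb{R}$, neither of which affects the resulting bound.
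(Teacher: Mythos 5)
Your proof is correct and follows essentially the same route as the paper: reduce to $\EE[G(\hat{C})]-G(C^*)$ via the sandwich from the proof of Theorem~\ref{thm:gaussian-2opt}, bound the regularization gap $G(C')-G(C^*)$ by $\tfrac{n\lambda C_m^2}{2}$ using optimality of $C'$ for the regularized objective, and bound the noise term by the $n\sqrt{s}$-Lipschitzness of $\sum_i f_i$ together with $\EE|\mathrm{Lap}(\Delta/\eps')|=\Delta/\eps'$ (the paper handles the linear term via $\EE[M\hat{C}]=MC'$, which is the same observation as your mean-zero cancellation). Your remark about projecting $\hat{C}$ onto $[0,\infty)$ is a small point the paper leaves implicit.
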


\begin{proof}
Recall the definition of $C^*_1$ from Lemma~\ref{lem:th-sensitivity}.
First we write the expression,
\begin{align*}
 &  \EE[\mathcal{L}_G(\hat{C}, D)]-2\inf_{C>0}\mathcal{L}_G(C, D) \\
 &\le \EE[G(\hat{C})]-G(C^*)\\
   &\le \EE[M\hat{C}]+\EE\left[\sum_{i=1}^nf_i(\hat{C})\right]-G(C^*)\\
    &= C_1^* M + \EE\left[\sum_{i=1}^nf_i(\hat{C})-\sum_{i=1}^nf_i(C_1^*)\right]+\sum_{i=1}^nf_i(C_1^*) - C^* M -\sum_{i=1}^nf_i(C^*).
\end{align*}
The first inequality comes from the proof of Lemma~\ref{thm:gaussian-2opt}. We bound the terms separately,
\begin{align}
    \EE\left[\sum_{i=1}^nf_i(\hat{C})-\sum_{i=1}^nf_i(C_1^*)\right]
    &\le\EE\left|\sum_{i=1}^nf_i(\hat{C})-\sum_{i=1}^nf_i(C_1^*)\right|\nonumber\\
    &\le n\sqrt{s}\EE[|\hat{C}-C_1^*|]\nonumber\\
    &\le n\sqrt{s}\frac{\Delta}{\eps'}=\frac{4\sqrt{s}(\sqrt{s})}{\lambda \eps'}.\label{equ:err-noise}
\end{align}
The remaining terms are bounded using the following fact
\[
\sum^n_{i=1} f_i(C_1^*) + C^*_1M + n\frac{\lambda}{2}(C_1^*)^2\le  \sum^n_{i=1} f_i(C_*) + C^*M +  n\frac{\lambda C^2_*}{2}.
\]
Hence, 
\begin{equation}
    \label{equ:err-regularize}
    \sum^n_{i=1} f_i(C_1^*) + C^*_1M-\sum^n_{i=1} f_i(C_*) - C^*M \le \frac{n\lambda C^2_*}{2}\le \frac{n\lambda C_m^2}{2}.
\end{equation}
Combining equation~\ref{equ:err-noise} and~\ref{equ:err-regularize} yields the desired result.
\end{proof}
The proof of differential privacy follows from Lemma~\ref{lem:th-sensitivity} and the definition of Laplace mechanism. Setting $\lambda = \frac{2\sqrt{2s}}{C_m\sqrt{n\eps'}}$ in Lemma~\ref{lem:hist-error} yields the error.

\section{Unbounded domain histograms}
\label{sec:open-domain-proof}

Recall that $\widehat{N}$ is the output of Algorithm~\ref{alg:open-domain-algorithm} and the expected error is characterized by 
\begin{equation}
    \mathbb{E}[\|\widehat{N} - \HistSum\|_1]= \sum_{j: \HistSum_j > 0} \mathbb{E}\left[ \left | \widetilde{N}_j 1_{\widetilde{N}_j>t}  - \sum_{i}{N_i}_j \right |\right] 
    \label{eqn:open-domain-error}
\end{equation}

Obviously~\eqref{eqn:open-domain-error} depends on the choice of the threshold $C$. The error can be large if $C$ is too small or too large, so the goal of our work is to find the best choice of $C$. 

\ignore{We have the following upper bound for~\eqref{eqn:open-domain-err}

\begin{theorem}
Let $\HistSum$ be the true aggregate histogram and $\widehat{N}$ be the private estimate obtained by the open-domain algorithm. Then, for any constant $\alpha\in(0, 1)$
    \begin{align}
        \mathbb{E}[\|\widehat{N} - \HistSum\|_1]&\le  \sum_{j:\bh_j>t}\left(\frac{C}{\eps}+\frac{1}{2}e^{-\frac{\eps}{C}(\bh_j-t)}\left(t-\frac{C}{\eps}\right)\right)+\sum_{j:\bh_j\le t}\nonumber\\
        &\quad+\sum_{j:\bh_j\le t}\left(\left(1+\frac{1}{2\alpha}\right)\bh_j + \frac{1}{2}(2\delta)^{1-\alpha}C^{\alpha}\left(\frac{1}{\eps}+(1-\alpha)\left(1+\log\frac{C}{2\delta}\right)\right)\right) \nonumber\\
           &\quad+ \sum_{i=1}^n\max\left\{ \|N_i\|_1 - C, 0\right\} \label{thm:open-err-ub}
    \end{align}
\end{theorem}

We can also prove a lower bound for~\eqref{eqn:open-domain-err}
\begin{theorem}
Let $\HistSum$ be the true aggregate histogram and $\widehat{N}$ be the private estimate obtained by the open-domain algorithm. Then
    \begin{align}
        2\mathbb{E}[\|\widehat{N} - \HistSum\|_1]&\ge  \sum_{j:\bh_j>t}\left(\frac{C}{\eps}+e^{-\frac{\eps}{C}(\bh_j-t)}\left(t-\frac{C}{\eps}\right)\right)+\sum_{j:\bh_j\le t}\bh_j \nonumber\\
           &\quad+ \sum_{i=1}^n\max\left\{ \|N_i\|_1 - C, 0\right\} \label{thm:open-err-lb}
    \end{align}
\end{theorem}
}

\subsection{Approximation of estimation error}
\begin{theorem}
Let $\HistSum$ be the true aggregate histogram and $\widehat{N}$ be the private estimate obtained by the unbounded-domain algorithm. 
    \begin{align}
        \mathbb{E}[\|\widehat{N} - \HistSum\|_1\mid \bh_j]&=\Theta\Bigg ( \sum_{j:\bh_j>t}\left(\frac{C}{\eps}+e^{-\frac{\eps}{C}(\bh_j-t)}\left(t-\frac{C}{\eps}\right)\right)\nonumber\\
        &\quad+\sum_{j:\bh_j\le t}\left(\bh_j+\HistSum_j+e^{-\frac{\eps}{C}(t-\bh_j)}\left(|\HistSum_j-t|-\HistSum_j+\frac{C}{\eps}\right)\right) \nonumber\\
           &\quad+ \sum_{i=1}^n\max\left\{ \|N_i\|_1 - C, 0\right\} \Bigg) \label{thm:open-err-tight}
    \end{align}
\end{theorem}
\begin{proof}
First, we state a fact about exponential distributions. 
\begin{lemma}
    Let $X$ be an exponential distribution with rate $\nu$ (i.e., $X\ge 0$ and $\Pr[X\ge t]=e^{-\nu x}$) and $a\ge 0$, then
    \[
    \EE[X|0\le X\le a] \Pr[0\le X\le a]= \frac{1}{\nu} - e^{-\nu a}\left(a+\frac{1}{\nu}\right).
    \]
    \label{lem:exponential}
\end{lemma}
\begin{proof}
    Recall that $\EE[X]=\frac{1}{\nu}$. Due to the memoryless property of $X$, we have $\EE[X|X>a]=a+\frac{1}{\nu}$. Thus,
    \begin{align*}
    \EE[X]  &= \EE[X|0\le X\le a]\Pr[0\le X\le a] + \EE[X|X>a]\Pr[X>a]\\
    &= \EE[X|0\le X\le a]\Pr[0\le X\le a] + e^{-\nu a}\left(a+\frac{1}{\nu}\right)\\
    &= \frac{1}{\nu}.        
    \end{align*}
    Rearranging the terms proves the lemma.
    
\end{proof}

We first divide the summation into two parts based on if $Z_j \geq 0$ or $Z_j < 0$. For notational simplicity, all expectations in this section contain an implicit condition on $\bh_j$.
\begin{align*}
& \mathbb{E}[\|\widehat{N} - \HistSum\|_1\mid \bh_j] \nonumber\\
& = \sum_{j: \HistSum_j > 0} \mathbb{E}\left[ \left | \left(\bh_j + Z_j \right) 1_{\bh_j+Z_j>t}  - \sum_{i}{N_i}_j \right |\right]\\
& = \frac{1}{2} \left(\sum_{j: \HistSum_j > 0}\mathbb{E}\left[ |\left(\bh_j + Z_j \right) 1_{\bh_j+Z_j>t} - \bh_j|\big| Z_j<0 \right] + \sum_{i: ||N_i||_1 > C} \left(||N_i||_1 - C\right) 
\right)  (**)\\
& +\frac{1}{2} \left(\sum_{j: \HistSum_j > 0}\mathbb{E}\left[ |\left(\bh_j + Z_j \right) 1_{\bh_j+Z_j>t} - \HistSum_j|\big| Z_j\ge 0 \right] 
\right), (*)
\end{align*}
where to prove $(**)$, we use the fact that if $Z_j < 0$, then $\bar{h}_j + Z_j < \bar{h}_j < \HistSum_j$ and furthermore
\begin{align*}
\sum_j \HistSum_j - \bar{h}_j =
& \sum_j\sum_{i} N_{i,j} - h_{i,j} \\
& =
\sum_{i} \sum_j N_{i,j} - h_{i,j} \\
& =
\sum_i ||N_i||_1 - ||h_i||_1 \\
& = \sum_{i: ||N_i||_1 > C} \left(||N_i||_1 - C\right).
\end{align*}

We now bound the middle term. We use the fact that conditioned on $Z_j<0$, $|Z_j|=-Z_j$ is an exponential distribution with mean $C/\eps$. 
\begin{align}
   &\sum_{j: \HistSum_j > 0}\mathbb{E}\left[ |\left(\bh_j + Z_j \right) 1_{\bh_j+Z_j>t} - \bh_j|\big| Z_j<0 \right]\nonumber\\
    &=\sum_{j: \bar{h}_j > t}\mathbb{E}\left[ |\left(\bh_j + Z_j \right) 1_{\bh_j+Z_j>t} - \bh_j|\big| Z_j<0 \right] + \sum_{j: \bar{h}_j \leq t}\mathbb{E}\left[ |\left(\bh_j + Z_j \right) 1_{\bh_j+Z_j>t} - \bh_j|\big| Z_j<0 \right]\nonumber \\
      &=\sum_{j: \bar{h}_j > t}\mathbb{E}\left[ |\left(\bh_j + Z_j \right) 1_{\bh_j+Z_j>t} - \bh_j|\big| Z_j<0 \right] + \sum_{j:\bh_j\le t}\bh_j  \nonumber\\
      &=\sum_{j:\bh_j>t} \left(\bh_j\Pr[Z_j\le t-\bh_j|Z_j<0] + \EE[|Z_j|\mid t-\bh_j<Z_j<0]\Pr[Z_j> t-\bh_j|Z_j<0]\right) + \sum_{j:\bh_j\le t}\bh_j \label{eqn:left-side-intermediate} \\
    &=\sum_{j:\bh_j>t} \left(\bh_j e^{-\frac{\eps}{C}(\bh_j-t)} +\frac{C}{\eps}-\left(\bh_j - t+\frac{C}{\eps}\right) e^{-\frac{\eps}{C}(\bh_j-t)}\right) + \sum_{j:\bh_j\le t}\bh_j\nonumber\\
   &=\sum_{j:\bh_j>t} \left(\frac{C}{\eps}+\left(t-\frac{C}{\eps}\right) e^{-\frac{\eps}{C}(\bh_j-t)}\right) + \sum_{j:\bh_j\le t}\bh_j\label{eqn:left-side-final} 
\end{align}
From~\eqref{eqn:left-side-intermediate} we used Lemma~\ref{lem:exponential}.

\begin{align}
    (*)&=
    \sum_{j: \HistSum_j > 0}\mathbb{E}\left[ |\left(\bh_j + Z_j \right) 1_{\bh_j+Z_j>t} - \HistSum_j|\big| Z_j\ge 0 \right] \nonumber\\
  &   = \sum_{j: \bar{h}_j \geq t}\mathbb{E}\left[ |\left(\bh_j + Z_j \right) 1_{\bh_j+Z_j>t} - \HistSum_j|\big| Z_j\ge 0 \right] +  \sum_{j: \bar{h}_j < t}\mathbb{E}\left[ |\left(\bh_j + Z_j \right) 1_{\bh_j+Z_j>t} - \HistSum_j|\big| Z_j\ge 0 \right] \nonumber\\
    &   = \sum_{j: \bar{h}_j \geq t}\mathbb{E}\left[ |\bh_j + Z_j - \HistSum_j|\big| Z_j\ge 0 \right] +  \sum_{j: \bar{h}_j < t}\mathbb{E}\left[ |\left(\bh_j + Z_j \right) 1_{\bh_j+Z_j>t} - \HistSum_j|\big| Z_j\ge 0 \right] \nonumber\\
      &   = \sum_{j: \bar{h}_j \geq t}\mathbb{E}\left[ |\bh_j + Z_j - \HistSum_j|\big| Z_j\ge 0 \right] \nonumber\\
      &\quad +  \sum_{j: \bar{h}_j < t}\mathbb{E}\left[ |\left(\bh_j + Z_j \right)  - \HistSum_j|\big| Z_j\ge t - \bar{h}_j \right]\Pr[Z_j\ge t-\bh_j\mid Z_j\ge 0]
      + 
      \text{Pr}(Z_j < t - \bar{h}_j\mid Z_j\ge 0) \HistSum_j\nonumber\\
&    =\sum_{j;\bh_j>t}\left(\HistSum_j-\bh_j+\frac{C}{\eps}(2e^{-\frac{\eps}{C}(\HistSum_j-\bh_j)}-1)\right)\label{eqn:right-side-large-hj}\\
    &\quad +\sum_{j:\bh_j<t}\left(\HistSum_j(1-e^{-\frac{\eps}{C}(t-\bh_j)})+e^{-\frac{\eps}{C}(t-\bh_j)}\EE[|h_j+Z_j-N_j| | h_j+Z_j>t]\right)\label{eqn:right-side-small-hj}\\
    &=\sum_{j:\bh_j>t}\Theta(\HistSum_j-\bh_j+\frac{C}{\eps})+\sum_{j:\bh_j<t}\left(\HistSum_j(1-e^{-\frac{\eps}{C}(t-\bh_j)})+e^{-\frac{\eps}{C}(t-\bh_j)}\Theta(|N_j-t|+\frac{C}{\eps})\right)\label{eqn:righ-side-final}
\end{align}
~\eqref{eqn:right-side-large-hj} is due to when $\bh_j>t$, 
\begin{align*}
    &\mathbb{E}\left[ |\bh_j + Z_j - \HistSum_j|\big| Z_j\ge 0 \right]\\
    &=\Pr[Z_j>\bN_j-\bh_j\mid Z_j\ge0]\mathbb{E}\left[ \bh_j + Z_j - \HistSum_j\big| Z_j>\bN_j-\bh_j \right]\\
    &\quad +\Pr[Z_j\le\bN_j-\bh_j\mid Z_j\ge0]\mathbb{E}\left[ \bN_j-\bh_j -Z_j\big| 0\le Z_j\le\bN_j-\bh_j \right]\\
    &=\frac{C}{\eps}e^{-\frac{\eps}{C}(\bN_j-\bh_j)}+(\bN_j-\bh_j)(1-e^{-\frac{\eps}{C}(\bN_j-\bh_j)})-\left(\frac{C}{\eps}-e^{-\frac{\eps}{C}(\bN_j-\bh_j)}\left(\bN_j-\bh_j+\frac{C}{\eps}\right)\right)\\
    &=\frac{C}{\eps}\left(2e^{-\frac{\eps}{C}(\bN_j-\bh_j)}-1\right)+\bN_j-\bh_j
\end{align*}

The conditional expectation in~\eqref{eqn:right-side-small-hj} is evaluated as,
\begin{align*}
    &\EE[|h_j+Z_j-N_j| | h_j+Z_j>t]\\
    &=\EE[|h_j+Z_j-N_j+t-h_j||Z_j>0]\\
    &=\EE[|t+Z_j-N_j||Z_j>0]
    =\begin{cases}
        (\HistSum_j-t+\frac{C}{\eps}(2e^{-\frac{\eps}{C}(\HistSum_j-t)}-1), &t<\HistSum_j \\
        t-N_j+\frac{C}{\eps}, &t\ge \HistSum_j\\
    \end{cases}\\
    &=\Theta(|N_j-t|+\frac{C}{\eps}),
\end{align*}
The final equality is due to Lemma~\ref{lem:const-appx-flipped-exponential}. Combining all the parts leads to~\eqref{eqn:righ-side-final}. 
\begin{lemma}
    Let $a>0$ be constant. The function $g(x)=x+\frac{1}{a}(2e^{-a x}-1)=\Theta(x+\frac{1}{a})$ for $x\ge 0$.
    \label{lem:const-appx-flipped-exponential}
\end{lemma}
\begin{proof}
    It is obvious that $g(x)\le x+\frac{1}{a}$ since $2e^{-a x}-1\le 1$. It remains to prove $g(x)\ge c(x+1/a)$ for some constant $c$.
    
    Consider the function $h(x)=g(x)/(x+1/a)$. Then,
    \[
    h'(x)=2\frac{\frac{1}{a}-e^{-a x}(x+2/a)}{(x+1/a)^2}
    \]
    Since the numerator $\phi(x)=\frac{1}{a}-e^{-ax}(x+2/a)$ is monotonically increasing for $x\ge 0$, and $\phi(0)=-1/a$, $\phi(\infty)=1/a$, there must be a unique $\xi>0$ such that $\phi(\xi)=0$. Thus $h(x)$ is decreasing in $(0, \xi)$ and increasing in $(\xi, \infty)$. The minimum of $h$ is reached when $x=\xi$, with a minimum value of
    \[
    h(\xi)=1-\frac{2}{a\xi+2}>0
    \]

    Rearranging the terms in equation $\phi(x)=0$, we easily note that $a\xi$ is the unique solution to the equation $e^{-x}(x+2)-1=0$. Note that
    \[
    e^{-1}(1+2)>1, \quad e^{-2}(2+2)<1
    \]
    Therefore, we must have $a\xi>1$. Note that $1-2/(a\xi+2)$ increases with $\xi$, thus,
    \[
    h(\xi)\ge 1-\frac{2}{1+2}=\frac{1}{3}.
    \]
    This implies $g(x)\ge \frac{1}{3}(x+1/a)$. 
    
\end{proof}
Now we can combine~\eqref{eqn:left-side-final},~\eqref{eqn:righ-side-final} to prove the theorem.

\begin{align*}
& \mathbb{E}[\|\widehat{N} - \HistSum\|_1] \nonumber\\
& = \sum_{j: \HistSum_j > 0} \mathbb{E}\left[ \left | \left(\bh_j + Z_j \right) 1_{\bh_j+Z_j>t}  - \sum_{i}{N_i}_j \right |\right]\\
& = \frac{1}{2} \sum_{j:\bh_j>t} \left(\frac{C}{\eps}+\left(t-\frac{C}{\eps}\right) e^{-\frac{\eps}{C}(\bh_j-t)}\right) + \frac{1}{2}\sum_{j:\bh_j\le t}\bh_j + \frac{1}{2}\sum_{i: ||N_i||_1 > C} \left(||N_i||_1 - C\right) 
 \\
&\quad +\frac{1}{2} \sum_{j:\bh_j>t}\left(\frac{C}{\eps}\left(2e^{-\frac{\eps}{C}(\bN_j-\bh_j)}-1\right)+\bN_j-\bh_j\right)+\frac{1}{2}\sum_{j:\bh_j\le t}\left(\HistSum_j(1-e^{-\frac{\eps}{C}(t-\bh_j)})+e^{-\frac{\eps}{C}(t-\bh_j)}\Theta(|N_j-t|+\frac{C}{\eps})\right)\\
& = \frac{1}{2} \sum_{j:\bh_j>t} \left(\frac{C}{\eps}+\left(t-\frac{C}{\eps}\right) e^{-\frac{\eps}{C}(\bh_j-t)}+\Theta(\HistSum_j-\bh_j+\frac{C}{\eps})\right) + \frac{1}{2}\sum_{i: ||N_i||_1 > C} \left(||N_i||_1 - C\right) \\
&\quad+\frac{1}{2}\sum_{j:\bh_j\le t}\left(\bh_j+\HistSum_j(1-e^{-\frac{\eps}{C}(t-\bh_j)})+e^{-\frac{\eps}{C}(t-\bh_j)}\Theta\left(|\bN_j-t|+\frac{C}{\eps}\right)\right)\\
\end{align*}
From Lemma~\ref{lem:const-appx-flipped-exponential}, the $\Theta$ expressions are upper bounded by a factor of 1 and lower bounded by a factor of 1/3. Therefore, 
\begin{align*}
&\quad \frac{1}{2}\sum_{i=1}^n\max\left\{ \|N_i\|_1 - C, 0\right\} +  \frac{1}{2}\sum_{j:\bh_j>t}\left(\frac{C}{\eps}+e^{-\frac{\eps}{C}(\bh_j-t)}\left(t-\frac{C}{\eps}\right)\right)\\
&\quad +\frac{1}{6}\sum_{j:\bh_j\le t}\left(\bh_j+\HistSum_j+e^{-\frac{\eps}{C}(t-\bh_j)}\left(|\HistSum_j-t|-\HistSum_j+\frac{C}{\eps}\right)\right)\\
    &\le\EE[\|\widehat{N}-\bN\|_1]\le  \sum_{i=1}^n\max\left\{ \|N_i\|_1 - C, 0\right\} +  \sum_{j:\bh_j>t}\left(\frac{C}{\eps}+e^{-\frac{\eps}{C}(\bh_j-t)}\left(t-\frac{C}{\eps}\right)\right)\nonumber\\
        &\quad+\frac{1}{2}\sum_{j:\bh_j\le t}\left(\bh_j+\HistSum_j+e^{-\frac{\eps}{C}(t-\bh_j)}\left(|\HistSum_j-t|-\HistSum_j+\frac{C}{\eps}\right)\right) \nonumber
\end{align*}

\end{proof}

\begin{corollary} Assume that $\delta\le 1/8$, $C\ge 1$ and $C\le e^{\eps/3\delta}$. Then
\begin{align*}
& \frac{1}{2} \sum_{i=1}^n\max\left\{ \|N_i\|_1 - C, 0\right\}+  \frac{1}{12} \sum_{j: \bh_j < t} \HistSum_j + \frac{1}{2}\sum_{j: \bh_j > t} \frac{C}{\epsilon} \\
& \leq  \mathbb{E}[\|\widehat{N} - \HistSum\|_1\mid \bh_j] \leq \sum_{i=1}^n\max\left\{ \|N_i\|_1 - C, 0\right\} +
   \sum_{j: \bh_j < t} \HistSum_j + \sum_{j: \bh_j > t}  t
\end{align*}
\end{corollary}
\begin{proof}
For terms with $\bh_j>t$,
\[
\frac{C}{\epsilon}\leq \frac{C}{\eps}+e^{-\frac{\eps}{C}(\bh_j-t)}\left(t-\frac{C}{\eps}\right) \leq t = \frac{C}{\epsilon} \log \frac{C}{2 \delta}.
\]

For the terms with $\bh_j\le t$,
\begin{align*}
& \frac{\bN_j}{2}\le W:=\bh_j+\HistSum_j+e^{-\frac{\eps}{C}(t-\bh_j)}\left(|\HistSum_j-t|-\HistSum_j+\frac{C}{\eps}\right)\le 2\bN_j.  \\
\end{align*}

To prove this, we consider $\bN_j\ge t$ and $\bN_j<t$ separately.
\paragraph{1. $\bN_j\ge t$} Since $t\ge C/\eps$ and $0\le\bh_j\le \bN_j$,
\[
W = \bh_j+\bN_j+e^{-\frac{\eps}{C}(t-\bh_j)}\left(\frac{C}{\eps}-t\right)\le 2\bN_j.
\]
$W$ is concave with respect to $\bh_j$, thus the minimum of $W$ must occur at either $\bh_j=0$ or $\bh_j=t$. When $\bh_j=0$,
\begin{equation}
W = \bN_j+\frac{2\delta}{e^\eps}\left(t-\frac{C}{\eps}\right)\ge \frac{\bN_j}{2}.    \label{equ:lb-hj=0}
\end{equation}
as long as $\bN_j\ge \frac{4\delta}{Ce^{\eps}}(t-\frac{C}{\eps})$. Since $\delta\le 1/4$ and $C\ge 1$, we must have $\bN_j\ge t\ge\frac{4\delta}{Ce^{\eps}}(t-\frac{C}{\eps}) $, so the condition is satisfied.

When $\bh_j=t$, 
\[
\bN_j\le W=\bN_j+\frac{C}{\eps}\le 2\bN_j.
\]
The final inequality is due to $C/\eps\le t\le \bN_j$.

\paragraph{2. $\bN_j<t$}
\[
W = \bh_j+\bN_j+e^{-\frac{\eps}{C}(t-\bh_j)}\left(t-2\bN_j+\frac{C}{\eps}\right)
\]
If $\bN_j>(t+C/\eps)/2$, then
\[
\frac{\bN_j}{2}\le\bh_j+\bN_j+e^{-\frac{\eps}{C}(t-\bh_j)}\left(\frac{C}{\eps}-t\right)\le W\le \bh_j+\bN_j\le 2\bN_j
\]
The first inequality is proved similarly as~\eqref{equ:lb-hj=0}. 

If $\bN_j\le (t+C/\eps)/2$, then
\[
W\ge \bh_j+\bN_j\ge \bN_j.
\]
It remains to prove that $W\le (1+\log\frac{C}{2\delta})\bN_j$. We consider the following function 
\[
f(x) = x+\beta e^{ax}(\gamma - x), \quad a=\frac{\eps}{C}, \gamma = \frac{1}{2}\left(t+\frac{C}{\eps}\right), \beta=\frac{2\delta}{Ce^\eps}.
\]
Note that since $\bh_j\le \bN_j$, we have $W\le 2f(\bN_j)$. We just need to upper bound $g(x)=f(x)/x$ when $x\in [1, \gamma]$. Taking the derivative,
\[
g'(x)=-\beta e^{ax}\frac{ax^2-a\gamma x+\gamma}{x^2}.
\]
When $a\gamma\le 4$, $g'(x)\le 0$, thus 
\[
g(x)\le g(1) = 1+\frac{\delta}{e^{\eps(1-1/C)}}\left(\frac{1}{\eps}\left(1+\log\frac{C}{2\delta}\right)+1-\frac{1}{C}\right),
\]
which is at most 2 given the assumption that $C\le e^{\eps/3\delta}$.

When $a\gamma> 4$, then consider the root of $g'(x)=0$,
\[
x_1=\frac{\gamma}{2}\left(1-\sqrt{1-\frac{4}{\alpha\gamma}}\right), x_2=\frac{\gamma}{2}\left(1+\sqrt{1-\frac{4}{\alpha\gamma}}\right).
\]
$x_1$ is a local minimum, and $x_2$ is a local maximum of $g(x)$. Thus the maximum of $g(x)$ on $[1, \gamma]$ must be either $x=1$ or $x=x_2$. Note that $x_1< \gamma/2$ and $x_2<\gamma$. We evaluate and upper bound $g(x_2)$,
\begin{align*}
    g(x_2)&=1+\beta e^{ax_2}\left(ax_1-1\right)\\
    &\le 1+\beta e^{a\gamma}\frac{a\gamma}{2}\\
    &=1+\frac{2\delta}{Ce^\eps}\frac{1}{4}\left(1+\eps+\log\frac{C}{2\delta}\right)\sqrt{e\frac{e^\eps C}{2\delta}}\\
    &=1+\sqrt{\frac{2\delta}{Ce^\eps}}\frac{1}{4}\left(1+\eps+\log\frac{C}{2\delta}\right)\le 2
\end{align*}
Combining with the upper bound for $g(1)$ completes the proof.
\end{proof}

We simplify the above proof further to prove Theorem~\ref{thm:open-domain-error-main}.
\begin{corollary} Let $\Cmax$ be an upper bound on $C$.
\begin{align*}
& \frac{1}{2} \sum_{i=1}^n\max\left\{ \|N_i\|_1 - C, 0\right\}+  \frac{1}{ 12\log \frac{\Cmax}{2\delta}} \sum_j \min(h_j, t) \\
& \leq  \mathbb{E}[\|\widehat{N} - \HistSum\|_1\mid \bh_j] \leq 
 \\
& 2 \sum_{i=1}^n\max\left\{ \|N_i\|_1 - C, 0\right\} +
 \sum_j \min(\bar{h}_j, t).
\end{align*}
\end{corollary}
\begin{proof}
First observe that 
\begin{align*}
 \sum_{j: \bh_j < t} \HistSum_j  + \sum_{j: \bh_j > t} t
&=  \sum_{j: \bh_j < t} \bar{h}_j   + \sum_{j: \bh_j > t} t + \sum_{j: \bh_j < t}  (\HistSum_j - \bar{h}_j)\\
& \leq  \sum_{j: \bh_j < t} \bar{h}_j   + \sum_{j: \bh_j > t} t+ \sum_{j:\bN_j>0}  (\HistSum_j - \bar{h}_j)\\
& =  \sum_{j: \bh_j < t} \bar{h}_j   + \sum_{j: \bh_j > t} t  + \sum_{i}  \max( \|N_i\|_1 - C, 0)\\
& = \sum_j \min(\bar{h}_j, t) +  \sum_{i}  \max( \|N_i\|_1 - C, 0)
\end{align*}
Similarly, for the lower bound, observe that
\begin{align*}
\frac{1}{12}\sum_{j: \bh_j < t} \HistSum_j + \frac{1}{2}\sum_{j: \bh_j > t} \frac{C}{\epsilon} & \geq \frac{1}{12}\sum_{j: \bh_j < t} \HistSum_j + \sum_{j: \bh_j > t} \frac{t}{ 2\log \frac{\Cmax}{2\delta}} \\
& \geq \frac{1}{12 \log \frac{\Cmax}{2\delta}} \sum_j \min(\bh_j, t).
\end{align*}
\end{proof}


\section{Bias reduction}
We first note that in many datasets, counts of most symbols appear very few times. For example, in the Sentiment140 dataset, which contains counts for a total of roughly $6\cdot 10^5$ words distributed across $6\cdot 10^5$ users, the average counts of all words among the users are no more than two. Therefore we analyze the debiasing step when the $\lambda_i$'s are small and prove the following result for our desbiasing algorithm given in Algorithm~\ref{alg:poisson_single}.

\begin{theorem}
\label{thm:single_item_acc}
Suppose $N_i\sim \Poi(\lambda_i)$. Let $\lambdaAvg=\frac{1}{n}\sum_{i}\lambda_i$, $\Sigma=\frac{1}{n}\sum_{i=1}^n(\lambda_i-\lambdaAvg)^2$
and $\hat{\lambda}=\min\{1, \max\{0, \widehat{N}/n\}\}$,
where $\widehat{N}$ is the output of Algorithm~\ref{alg:poisson_single}. If $\lambdaAvg\le 1$, then 
    \begin{align}
    \EE[(\lambdaAvg-\hat{\lambda})^2]\le \gamma_C^2\left(\frac{C^2}{n^2\eps^2}+\frac{\lambdaAvg}{n}+\min\left\{1, \frac{1}{8\pi (C-1)}\right\}\Sigma^2\right).\label{equ:single_item_error}
    \end{align}
where $\gamma_C=\Pr[\Poi(1)< C]^{-1}\le \max\left\{e, \frac{1}{1-e^{-(C-1)^2/2C}}\right\}$.
If we further assume that $\lambda_i\le 1$, then
\begin{align}
    \EE[(\lambdaAvg-\hat{\lambda})^2]\le \gamma_C^2\left(\frac{C^2}{n^2\eps^2}+{\frac{\lambdaAvg}{n}}+\frac{1}{4 ((C-1)!)^2}\cdot\Sigma^2\right).
    \label{equ:single_item_all_less_than_1}
    \end{align}
\end{theorem}

The error consists of three terms. The first term is the error due to added noise, 
which is proportional to the clipping threshold $C$. 
The second term is essentially the variance of the random variable $\frac{1}{n}\sum_i N_i$, 
which is an inherent error due to the randomness in the counts $N_i$'s.
The third term is a bias term which depends on the closeness of user distributions, 
characterized by $\Sigma$, the variance of $\lambda_i$'s.
If the users' distributions are similar, then we can expect the estimation error to be small. Note that the bias term has a $1/C$ rate. The detailed proof is provided in Appendix~\ref{sec:single_item_acc_proof} We provide a bound with a better dependence on $C$ in Appendix~\ref{sec:single_item_improve}.

Next we analyze the the optimal choice of threshold $C$ after the debiasing step. Observe that to minimize the error upper bound in \eqref{equ:single_item_error}, roughly we want to choose
\[
C\propto\left(n\Sigma\right)^{2/3}+1.
\]
Therefore, when user's distribution are similar, we can use a smaller clipping threshold. This implies that as long as 
\[
\Sigma=O(n^{-1/4}),
\]
we can find a $\C$ that ensures a squared error of $O(1/n)$, which matches the error for the i.i.d. case. 

From~\eqref{equ:single_item_all_less_than_1}, when $\lambda_i\le 1$ for any user $i$, we can choose 
\[
\C\propto 1+\log\left(1+n\Sigma\right).
\]
This choice of $C$ always guarantees $O(1/n)$ error since when $\lambda_i\le1$ for all $i$, $\Sigma\le 1$.

In practice, we can also privately choose $\C$ as the top $\lceil1/\eps\rceil$ count as suggested by Lemma~\ref{lem:laplac-2opt}.

So far we have characterized the effect of debiasing under heterogeneous data. We next show that under mild assumptions debiasing helps even if data is non i.i.d.. The formal result is stated in~\cref{thm:single-poisson-gap}. The proof is in Appendix~\ref{sec:single-poisson-gap-proof}.

\begin{theorem}
\label{thm:single-poisson-gap}
Let $\bar{h}=\frac{1}{n}\sum_{i=1}^nh(\lambda_i)$. Let  $\hat{\lambda}_L=\widehat{N}_L/n$ be the average count obtained by Algorithm~\ref{alg-clipping} with Laplace noise. Assume that $\bar{h}\ge h_{\min}:=h(\lambdaAvg)-\frac{\lambdaAvg-h(\lambdaAvg)}{\gamma_C-1}$. Write $\bar{h}=h_{\min}+\alpha\frac{\lambdaAvg-h(\lambdaAvg)}{\gamma_C-1}$ where $\alpha\in(0, 1]$. Then the gap between Algorithm~\ref{alg-clipping} and Algorithm~\ref{alg:poisson_single} is
\begin{align}
    &\quad \EE[(\lambdaAvg-\hat{\lambda}_L)^2]-\EE[(\lambdaAvg-\hat{\lambda})^2]\nonumber\\
    &\ge \frac{\alpha(2\gamma_C-(\gamma_C+1)\alpha)}{\gamma_C-1}(\lambdaAvg- h(\lambdaAvg))^2 - O_C\left( \frac{1}{n} \right).\label{equ:debias-gap}
\end{align}
\end{theorem}

This implies that for any fixed $C$, under the assumptions stated in the theorem, with $n$ sufficiently large, 
there is always a constant gap between the two algorithms and debiasing helps even if the data is not i.i.d.. This result justifies the choice of $C$ as the optimal quantile suggested by Lemma~\ref{lem:laplac-2opt}.

We argue that the assumption of $\bar{h}\ge h_{\min}$ is not too restrictive.  It essentially requires that either $\lambda_i$'s are sufficiently similar, or $C$ is sufficiently larger than $\lambdaAvg$. Indeed, if $\lambda_i=\lambdaAvg$ for all user $i$, then $h(\lambdaAvg)=\bar{h}$; if $C$ is sufficiently large, then $h$ is  almost linear near $\lambdaAvg$ and hence $\bar{h}$ is close to $h(\lambdaAvg)$. 

As a specific example, set $C=2, \lambdaAvg=1$. If all $\lambda_i\in[0, 2]$,  due to concavity of $h$, we have $\bar{h}\ge h(2)/2\ge 0.729$. With some arithmetic, $h_{\min}\le 0.61$, and the first term in ~\eqref{equ:debias-gap} is at least 0.0217.  Note that this is the difference between squared errors. 
The gap between absolute errors could well be of order 0.1, which is significant considering that $\lambdaAvg=1$.  This example shows that Algorithm~\ref{alg:poisson_single} can achieve significant 
improvement even when the variance of $\lambda_i$s is constant.

\subsection{Proof of Theorem~\ref{thm:single_item_acc}}
\label{sec:single_item_acc_proof}
\begin{proof}

Let $Y=\frac{1}{n}Y_i$. Then, when $N_i\sim \Poi(\lambda_i)$, 

\begin{align*}
    \EE[Y] &=\frac{1}{n}\sum_{i=1}^nh(\lambda_i)\\
    &=\frac{1}{n}\sum_{i=1}^n \left(\sum_{j=0}^{\C-1}j\cdot \Pr[ N_i=j]+ \sum_{j=\C}^{\infty}C\cdot \Pr[ N_i=j]\right)\\
    &=\frac{1}{n}\sum_{i=1}^n\left(\C- \sum_{j=0}^{\C-1}\left(\C-j\right)\Pr[N_i=j]\right)\\
    &=\C-\frac{1}{n}\sum_{i=1}^n\sum_{j=0}^{\C-1}\left(\C-j\right)e^{-\lambda_i}\frac{\lambda_i^j}{j!}
\end{align*}

Let $Z=\Lap(\C/n\eps)$, then we have $\hat{\lambda}=h^{-1}(Y+Z)$.
We first bound the error for estimating $h(\lambda)$,
\begin{align}
    \EE\left[(h(\lambdaAvg)-h(\hat{\lambda}))^2\right]&=\EE\left[(h(\lambdaAvg)-\EE[Y]+\EE[Y]-h(\hat{\lambda}))^2\right]\nonumber\\
    &=\EE[(h(\hat{\lambda})-\EE[Y])^2]+(\EE[Y]-h(\lambdaAvg))^2\nonumber\\
    &= \EE[Z^2]+\EE[(Y-\EE Y)^2]+(\EE[Y]-h(\lambdaAvg))^2\nonumber\\
    &\le \frac{\C^2}{n^2\eps^2}+\EE[(Y-\EE Y)^2]+(\EE[Y]-h(\lambdaAvg))^2.\label{eq:h_error}
\end{align}

We first bound $\EE[(Y-\EE[Y])^2]$. 
\begin{align}
    \EE[(Y-\EE[Y])^2] &= \frac{1}{n^2}\EE\left[\left(\sum_{i=1}^n(Y_i-\EE[Y_i])\right)^2\right]\nonumber\\
    &=\frac{1}{n^2}{\sum_{i=1}^n \Var[Y_i]}\label{eq:clip_var}\\
    &\le \frac{1}{n^2}{\sum_{i=1}^n\Var[N_i]}=\frac{1}{n^2}{\sum_{i=1}^n\lambda_i}.\nonumber
\end{align}
The final inequality is due to $\Var[Y_i]\le \Var[N_i]$. To see this we use the symmetrization trick. Let $N_i'$ be an independent copy of $N_i$ and $Y_i'=\clip(N_i', C)$. Then by definition $|Y_i-Y_i'|\le |N_i-N_i'|$. Hence,
\begin{align*}
    \Var[Y_i]=\EE[(Y_i-Y_i')^2/2]\le \EE[(N_i-N_i')^2/2]=\Var[N_i].
\end{align*}

We then bound $|\EE[Y]-h(\lambdaAvg)|$. We compute Taylor expansion of $\EE[Y]$ at $\lambdaAvg$ with the Lagrangian remainder,
\begin{align*}
    \EE[Y]&= \frac{1}{n}\sum_{i=1}^n\left(h(\lambdaAvg)+h'(\lambdaAvg)(\lambda_i-\lambdaAvg)+\frac{h''(\xi_i)}{2}(\lambdaAvg-\lambda_i)^2 \right)\\
    &=h(\lambdaAvg)+\frac{1}{n}\sum_{i=1}^n\frac{h''(\xi_i)}{2}(\lambdaAvg-\lambda_i)^2,
\end{align*}
where $\xi_i\in(\min\{\lambdaAvg, \lambda_i\}, \max\{\lambdaAvg, \lambda_i\})$ . We move on to compute $h''(\lambda)$,
\begin{align*}
    h''(\lambda)&=e^{-\lambda}\Bigg(-\sum_{j=0}^{\C-1}\frac{\C-j}{j!}\lambda^j+2\sum_{j=0}^{\C-2}\frac{\C-(j+1)}{j!}\lambda^j -\sum_{j=0}^{\C-3}\frac{\C-(j+2)}{j!}\lambda^j\Bigg)\\
    &=-e^{-\lambda}\frac{\lambda^{\C-1}}{(\C-1)!}
\end{align*}

We can verify that for $C\ge 1$, $|h''(\lambda)|$ increases when $\lambda\le C-1$ and decreases when $\lambda\ge C-1$. Therefore by Sterling's approximation, 
\begin{align*}
    |h''(\xi_i))|\le \frac{(C-1)^{C-1}}{e^{C-1}(C-1)!}\le \frac{1}{\sqrt{2\pi(C-1)}}.
\end{align*}
Thus, 
\begin{align*}
    |\EE[Y]-h(\lambdaAvg)|\le \frac{1}{2\sqrt{2\pi (C-1)}}\cdot\frac{1}{n}\sum_{i=1}^n(\lambda_i-\lambdaAvg)^2.
\end{align*}

Combining all the parts we have
\begin{align*}
\EE[|h(\hat{\lambda})-h(\lambdaAvg)|]\le \frac{C}{n\eps}+\frac{1}{n}\sqrt{\sum_{i=1}^n\lambda_i}+\frac{1}{2\sqrt{2\pi (C-1)}}\cdot\frac{1}{n}\sum_{i=1}^n(\lambda_i-\lambdaAvg)^2.
\end{align*}

To bound the estimation error $\EE[|\lambdaAvg-\hat{\lambda}|]$, we first compute $h'(\lambda)$,
\begin{align*}
    h'(\lambda) &=-e^{-\lambda}\left(\sum_{j=0}^{\C-2}\frac{\C-(j+1)}{j!}\lambda^j-\sum_{j=0}^{\C-1}\frac{\C-j}{j!}\lambda^j\right)\\
    &=e^{-\lambda}\sum_{j=0}^{\C-1}\frac{\lambda^j}{j!}
\end{align*}

Let $X\sim \Poi(1)$. We note that for $\lambda\in[0, 1]$,
\[
|h'(\lambda)|\ge |h'(1)| = 1 - \Pr[X \ge C]=\frac{1}{\gamma_C} \ge 1-e^{-\frac{(C-1)^2}{2C}}.
\]
Hence, we can proceed to bound the error for estimating $\lambdaAvg$,
\begin{align}
    &\quad\EE[(\lambdaAvg-\hat{\lambda})^2]\nonumber\\
    &\le \sup_{\lambda\in[0, 1]}\frac{1}{|h'(\lambda)|^2} \EE[(h(\lambdaAvg)-h(\hat{\lambda}))^2]\label{eq:lamb_error_from_h_error}\\
    &\le  \gamma_C^2\bigg(\frac{C^2}{n^2\eps^2}+\frac{1}{n^2}{\sum_{i=1}^n\lambda_i}+\frac{1}{8\pi (C-1)}\cdot\left(\frac{1}{n}\sum_{i=1}^n(\lambda_i-\lambdaAvg)^2\right)^2\Bigg).\label{equ:1item_err}
\end{align}

If we assume that $\lambda_i\le1$ for all $i$ we can bound $|h''(\xi_i)|$ as
\begin{align*}
    |h''(\xi_i))|\le \max_{\lambda\in[0,1]}|h''(\lambda)|\le \frac{1}{(C-1)!}. 
\end{align*}
Hence the last term can be bounded as
\begin{align*}
    |\EE[Y]-h(\lambda)|\le \frac{1}{2(C-1)!}\cdot\frac{1}{n}\sum_{i=1}^n(\lambda_i-\lambda)^2.
\end{align*}
\end{proof}

\subsection{Improved bound of Theorem~\ref{thm:single_item_acc}}
\label{sec:single_item_improve}
\begin{theorem}
    Suppose $\lambda_i$ are i.i.d. drawn from a distribution $F$ with mean $\lambdaAvg$ and variance $\Sigma$. Then for $C\ge 3$, Algorithm~\ref{alg:poisson_single} yields an error of
    \[
\EE[(\lambdaAvg-\hat{\lambda})^2]\le \gamma_C^2\left(\frac{C^2}{n^2\eps^2}+\frac{\lambdaAvg}{n}+ \frac{1}{8\pi (C-1)}\left(0.83^{C-1}\Sigma+\Pr_{\lambda\sim F}\left(\lambda\ge \frac{C-1}{2}\right)\right)^2\right)
    \]
\end{theorem}
The theorem shows that if the distribution of user's average counts are concentrated and has small tail probability, then we can obtain small estimation error.

Suppose that $C\ge 3$. Let $\eta=1/2$. We bound $|\EE[Y]-h(\lambdaAvg)|$ by analyzing $\lambda_i$ close to $C-1$ and far from $C-1$ separately.
\begin{align*}
    &\quad|\EE[Y]-h(\lambdaAvg)|\\
    &=\frac{1}{n}\sum_{i=1}^n\frac{|h''(\xi_i)|}{2}(\lambdaAvg-\lambda_i)^2\\
    &=\frac{1}{n}\sum_{i:\frac{\lambda_i}{C-1}\in (\eta, 1/\eta)}\frac{|h''(\xi_i)|}{2}(\lambdaAvg-\lambda_i)^2 + \frac{1}{n}\sum_{i:\frac{\lambda_i}{C-1}\notin (\eta , 1/\eta)}\frac{|h''(\xi_i)|}{2}(\lambdaAvg-\lambda_i)^2
\end{align*}
If $\lambda_i\le \eta (C-1)$ and $C\ge 3$, then we also have $\xi_i\le \eta (C-1)$. In this case,
\begin{align*}
    &\quad|h''(\xi_i)|\le |h''(\eta (C-1))|=\frac{(\eta(C-1))^{C-1}}{e^{\eta(C-1)}(C-1)!}\\
    &\le \frac{1}{\sqrt{2\pi(C-1)}}\left(\frac{\eta e}{e^\eta}\right)^{C-1}\le \frac{1}{\sqrt{2\pi(C-1)}}\cdot 0.83^{C-1}
\end{align*}

The last inequality follows by $ex /e^x$ is increasing for $x\in[0, 1]$. Using a similar argument, we can verify that the above inequality also holds when $\lambda_i\ge (C-1)/\eta$. Therefore, 
\begin{align*}
|\EE[Y]-h(\lambdaAvg)|\le \frac{1}{2\sqrt{2\pi (C-1)}}\cdot\frac{1}{n}\Bigg(\sum_{i:\frac{\lambda_i}{C-1}\notin (1/2, 2)}0.83^{C-1}(\lambda_i-\lambdaAvg)^2 +\sum_{i:\frac{\lambda_i}{C-1}\in (1/2, 2)}(\lambda_i-\lambdaAvg)^2\Bigg).
\end{align*}

For convenience let $\Sigma=\frac{1}{n}\sum_{i=1}^n(\lambda_i-\lambdaAvg)^2$.
We can further bound the bias by
\begin{equation}
    |\EE[Y]-h(\lambdaAvg)|\le0.83^{C-1}\Sigma +\frac{\sqrt{2}(C-1)^{3/2}}{\sqrt{\pi }}\frac{1}{n}\sum_{i=1}^n\indic{\lambda_i>(C-1)/2}.
\label{equ:single_item_refined} 
\end{equation}

We can see that the bias term depends on $C$, $\Sigma$, and  $\sum_{i=1}^n\indic{\lambda_i>2(C-1)}$ (which depends on the distribution of $\{\lambda_i\}_{i=1}^n$). To ensure an $O(1/n)$ rate, the two terms should be at most $O(1/\sqrt{n})$,
\[
0.83^{C-1}\Sigma\le\frac{1}{\sqrt{n}}  \quad \frac{4(C-1)^{3/2}}{2\sqrt{2\pi }}\frac{1}{n}\sum_{i=1}^n\indic{\lambda_i>(C-1)/2}\le\frac{1}{\sqrt{n}}.
\]

Note that in the worst case
\[
\frac{1}{n}\sum_{i=1}^n\indic{\lambda_i>(C-1)/2}\le O\left(\frac{\Sigma}{(C-1)^2}\right),
\]
which recovers~\eqref{equ:single_item_error}. The bound could have a better dependence on $C$ if $\lambda_i$s are more concentrated. For example, if $\lambda_i\le 1$, then the above quantity is 0 as long as $C> 3$, and we can choose $C=3+O(\log (1+n\Sigma))$ to achieve $O(1/n)$ mean squared error.

More generally, if  $\lambda_i$s are from a distribution with exponential tail, i.e. $\Pr[\lambda_i\ge x]=O(\exp(-\Omega(x)))$, then
\[
\frac{1}{n}\sum_{i=1}^n\indic{\lambda_i>2(C-1)}\simeq \Pr[\lambda_i\ge (C-1)/2]=O(\exp(-\Omega(C-1))).
\]

Choosing $C=3+O(\log n)$ gives $O(1/n)$ mean squared error.





\subsection{Proof of Theorem~\ref{thm:single-poisson-gap}}
\label{sec:single-poisson-gap-proof}
Before we proceed to the proof, we first characterize the error of Algorithm~\ref{alg-clipping} with Laplace noise.

\begin{lemma}
Let $\hat{\lambda}_L=\widehat{N}_L/n$ where $\widehat{N}_L$ is the output of Algorithm~\ref{alg-clipping} with Laplace noise. Then
\[
\EE\left[(\lambdaAvg-\hat{\lambda}_L)^2\right]=\left(\lambdaAvg-\frac{1}{n}\sum_{i=1}^n h(\lambda_i)\right)^2+\frac{1}{n^2}\sum_{i=1}^n\Var(Y_i)+\frac{C^2}{n^2\eps^2}.
\]
\end{lemma}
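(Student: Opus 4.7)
The plan is to recognize the stated identity as the textbook bias--variance decomposition applied to the estimator $\hat{\lambda}_L$, which is a sum of independent random variables. First I would unpack the algorithm's output: by Algorithm~\ref{alg-clipping} with Laplace noise, $\hat{N}_L=\sum_{i=1}^n Y_i + Z$ where $Y_i=\clip(N_i,C)$ are independent (because the $N_i$ are) and $Z\sim\Lap(C/\eps)$ is independent of the $Y_i$'s. Dividing through by $n$ gives $\hat{\lambda}_L=\tfrac{1}{n}\sum_i Y_i+\tfrac{Z}{n}$.

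Next I would compute the mean of $\hat{\lambda}_L$. Since $Z$ is zero-mean and $\EE[Y_i]=\EE_{N_i\sim\Poi(\lambda_i)}[\clip(N_i,C)]=h(\lambda_i)$ by the definition of $h$ in Algorithm~\ref{alg:poisson_single}, we obtain $\EE[\hat{\lambda}_L]=\tfrac{1}{n}\sum_i h(\lambda_i)$. Therefore the squared bias of $\hat{\lambda}_L$ as an estimator of $\lambdaAvg$ is exactly $\bigl(\lambdaAvg-\tfrac{1}{n}\sum_i h(\lambda_i)\bigr)^2$, which matches the first term in the lemma.

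For the variance, I would use the independence of the $Y_i$'s and of $Z$ to split
\[
\Var(\hat{\lambda}_L)=\frac{1}{n^2}\sum_{i=1}^n\Var(Y_i)+\frac{1}{n^2}\Var(Z),
\]
and substitute the Laplace variance (up to the constant convention used elsewhere in the paper, e.g.\ in \eqref{eq:h_error}) to obtain the $\tfrac{C^2}{n^2\eps^2}$ term. Combining the bias squared and the variance via the standard identity $\EE[(\lambdaAvg-\hat{\lambda}_L)^2]=\bigl(\lambdaAvg-\EE[\hat{\lambda}_L]\bigr)^2+\Var(\hat{\lambda}_L)$ finishes the proof.

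There is no real obstacle here; the result is essentially bookkeeping, and the only point worth checking carefully is that the three sources of randomness (the Poisson draws $\{N_i\}$ and the Laplace noise $Z$) are mutually independent, which is guaranteed by the problem setup and Algorithm~\ref{alg-clipping}.
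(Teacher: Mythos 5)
Your proposal is correct and follows essentially the same route as the paper: the paper's proof is exactly the bias--variance decomposition $\EE[(\lambdaAvg-\hat{\lambda}_L)^2]=(\lambdaAvg-\EE[\hat{\lambda}_L])^2+\Var(\hat{\lambda}_L)$, with the variance split over the independent $Y_i$'s and the Laplace noise. Your remark about the Laplace variance constant is apt, since the paper uses the convention $\EE[Z^2]\le C^2/(n^2\eps^2)$ throughout rather than the textbook $2b^2$.
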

\begin{proof}
\begin{align*}
    \EE\left[(\lambdaAvg-\hat{\lambda}_L)^2\right]
    &=\EE\left[(\lambdaAvg-\EE[\hat{\lambda}_L]+\EE[\hat{\lambda}_L]-\hat{\lambda}_L)^2\right]\\
    &=\EE\left[(\lambdaAvg-\EE[\hat{\lambda}_L])^2\right]+\EE\left[(\EE[\hat{\lambda}_L]-\hat{\lambda}_L)^2\right]\\
    &=\left(\lambdaAvg-\frac{1}{n}\sum_{i=1}^n h(\lambda_i)\right)^2+\frac{1}{n^2}\sum_{i=1}^n\Var(Y_i)+\frac{C^2}{n^2\eps^2}.
\end{align*}
\end{proof}

Now we have all the ingredients to complete Theorem~\ref{thm:single-poisson-gap}
\begin{proof}
Combining ~\eqref{eq:h_error},~\eqref{eq:clip_var}, and~\eqref{eq:lamb_error_from_h_error} we have
\begin{align*}
   \EE[(\lambdaAvg-\hat{\lambda})^2]\le\gamma_C^2\left(  \frac{C^2}{n^2\eps^2}+\frac{1}{n^2}\sum_{i=1}^n\Var[Y_i]+\left(h(\lambdaAvg)-\frac{1}{n}\sum_{i=1}^nh(\lambda_i)\right)^2\right).
\end{align*}

Let $\bar{h}=-\frac{1}{n}\sum_{i=1}^nh(\lambda_i)$.Therefore, 
\begin{align*}
    &\quad \EE[(\lambdaAvg-\hat{\lambda}_L)^2]-\EE[(\lambdaAvg-\hat{\lambda})^2]\\
    &= -(\gamma_C^2-1)\left(\frac{C^2}{n^2\eps^2}+\frac{1}{n^2}\sum_{i=1}^n\Var[Y_i]\right)+\left(\lambdaAvg-\bar{h}\right)^2-\gamma_C^2\left(h(\lambdaAvg)-\bar{h}\right)^2\\
\end{align*}
We bound the terms separately. Note that $\Var[Y_i]\le \Var[N_i]=\lambda_i$. Hence, 
\begin{align*}
    (\gamma_C^2-1)\left(\frac{C^2}{n^2\eps^2}+\frac{1}{n^2}\sum_{i=1}^n\Var[Y_i]\right)\le (\gamma_C^2-1)\left(\frac{C^2}{n^2\eps^2}+\frac{1}{n}\right).
\end{align*}
Recall the assumption that $\bar{h}\ge h_{\min}:=h(\lambdaAvg)-\frac{\lambdaAvg-h(\lambdaAvg)}{\gamma_C-1}$, and write $\bar{h}=h_{\min}+\alpha \frac{\lambdaAvg-h(\lambdaAvg)}{\gamma_C-1}$. The remaining part simplifies to 
\begin{align*}
    (\lambdaAvg-\bar{h})^2-\gamma_C^2(h(\lambdaAvg)-\bar{h})^2&=\frac{(\gamma_C-\alpha)^2-\gamma_C^2(1-\alpha)^2}{(\gamma_C-1)^2}(\lambdaAvg-h(\lambdaAvg))^2\\
    &=\frac{\alpha(2\gamma_C-(\gamma_C+1)\alpha)}{\gamma_C-1}(\lambdaAvg-h(\lambdaAvg))^2
\end{align*}
Combining the two parts completes the proof.
\end{proof}

\subsection{Experiments}
In this section, we run experiments for Algorithm~\ref{alg:poisson_single} with both synthetic datasets and words in Sentiment140. For the synthetic part, we generate $n=10^6$ users with $\lambda_1, \dots, \lambda_n$ from a Dirichlet distribution with parameter $\alpha$. Larger $\alpha$ means that the $\lambda_i$s are closer. In this experiment, we set $C$ to the privately estimated top $1/\varepsilon$ count among users, as discussed in ~\citet{amin2019bounding}.
Figure~\ref{fig:single_synthetic} shows that the debiased output of  Algorithm~\ref{alg:poisson_single} greatly reduces the error compared to the original output of Algorithm~\ref{alg-clipping}, especially for large $\alpha$ (meaning the dataset is more i.i.d. like).

\begin{figure}[t]
{
\centering
\includegraphics[width=0.35\textwidth]{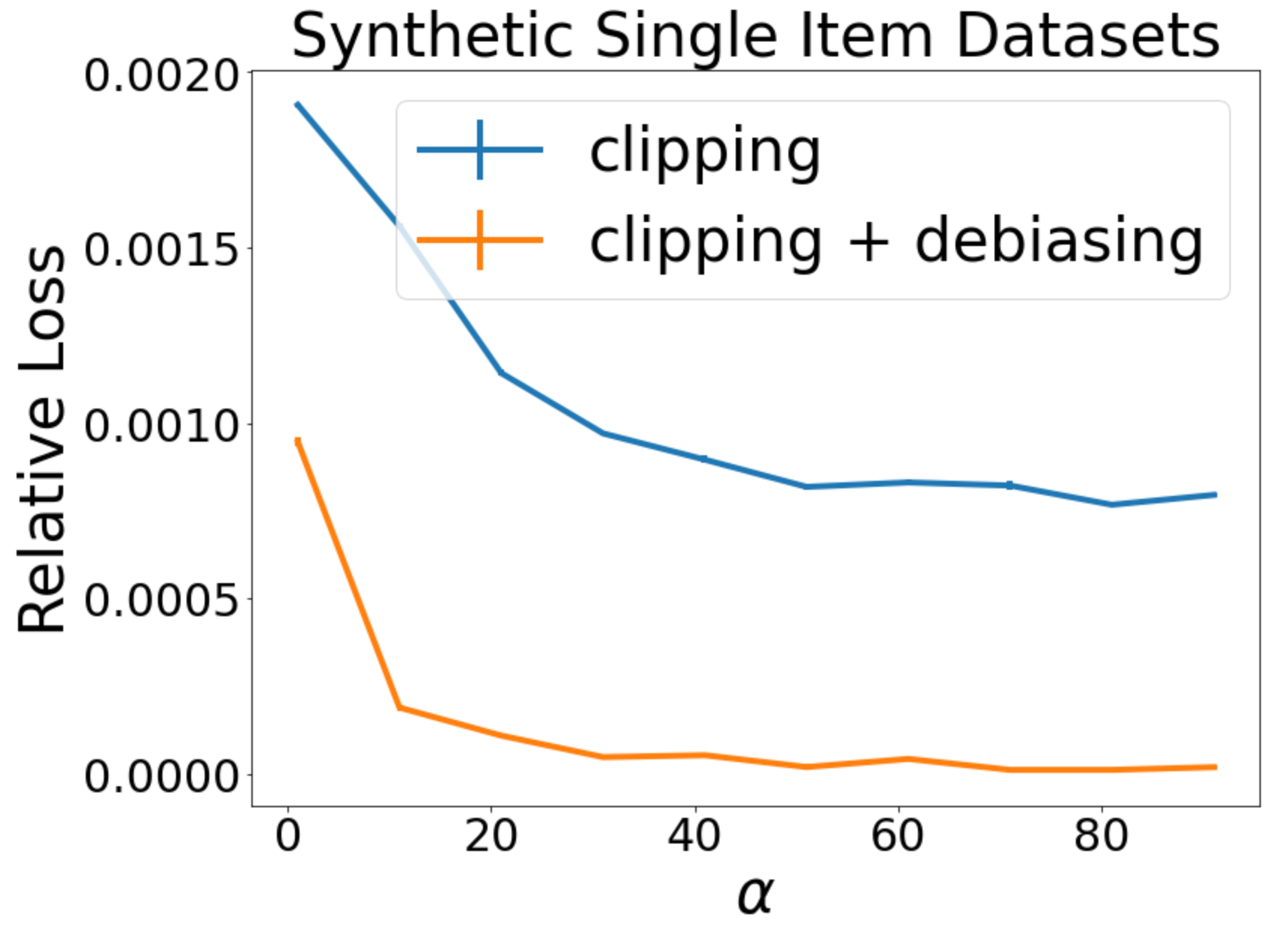}
\caption{Total counts estimation for single item on synthetic Poisson datasets. Larger $\alpha$ means that user data distributions are more similar.}
\label{fig:single_synthetic}
}
\end{figure}

We also run experiments on three population words in Sentiment140: ``the'', ``today'' and ``you''. Table~\ref{table:single_sentiment_140} shows that Algorithm~\ref{alg:poisson_single} performs better than Algorithm~\ref{alg-clipping}, but the gain is not as much as synthetic datasets that are close to i.i.d. distributions. 

\begin{table}
\centering  
\caption{Single word in Sentiment140. The ``debiasing'' columns are results of clipping + debiasing.}
  \begin{tabular}{  c  c  c  }
    \toprule
    & clipping  & debiasing \\
    word & avg loss$\pm$std & avg loss$\pm$std \\
    \midrule
    the & $0.0289$  & $0.0257$  \\ 
    & $\pm 2.40 \cdot 10^{-5}$ & $\pm2.48 \cdot 10^{-5}$ \\ \hline
    today & $0.0381$  & $0.0155$  \\ 
    & $\pm4.34 \cdot 10^{-5}$& $\pm3.34 \cdot 10^{-5}$\\ \hline
    you & $0.0745$  & $0.0671$ \\ 
    & $\pm2.83 \cdot 10^{-5}$ & $\pm7.26 \cdot 10^{-5}$ \\
    \bottomrule
  \end{tabular}
  \label{table:single_sentiment_140}
\end{table}

\subsection{Extension to $d>1$} We now discuss two possible extensions to the general $d$.

1. A natural extension to the entire histogram is to apply Algorithm~\ref{alg:poisson_single} to each symbol in the histogram separately. To ensure $(\eps, \delta)$ differential privacy, we assign each symbol a privacy budget of $O(\eps/\sqrt{d\log(1/\delta)})$ by strong composition~\citep{kairouz2017composition}. The main disadvantage is that when $d$ is large, clipping each coordinate separately may perform poorly compared to clipping the $\ell_1$ or $\ell_2$ norm of the entire histogram. 

2. We can generalize Algorithm~\ref{alg:poisson_single} to $d>1$ by replacing 1-d clipping with the high dimensional clipping functions as defined in Algorithm~\ref{alg-clipping}. Then choose a suitable function $g$ that essentially inverts the expectation of the clipped vector $Y_i$. However finding such inverse may be difficult in high dimensions as it likely involves non-convex optimization.

\section{Additional experiments}
\label{sec:cm15-experiment}
\subsection{Algorithm~\ref{alg-convex} with fixed $\Cmax=150$}

Fixing $\Cmax$ performs better than $\Cmax = \emph{\text{DP-$M$-quantile}}$, because in the latter case, we need to split half of the privacy budget to estimate \emph{\text{DP-$M$-quantile}}. We note that the output perturbation method has a large variance for the SNAP dataset and investigating the reason behind is an interesting future direction.

\begin{figure}[h]
    \centerline{\includegraphics[width=0.38\textwidth]{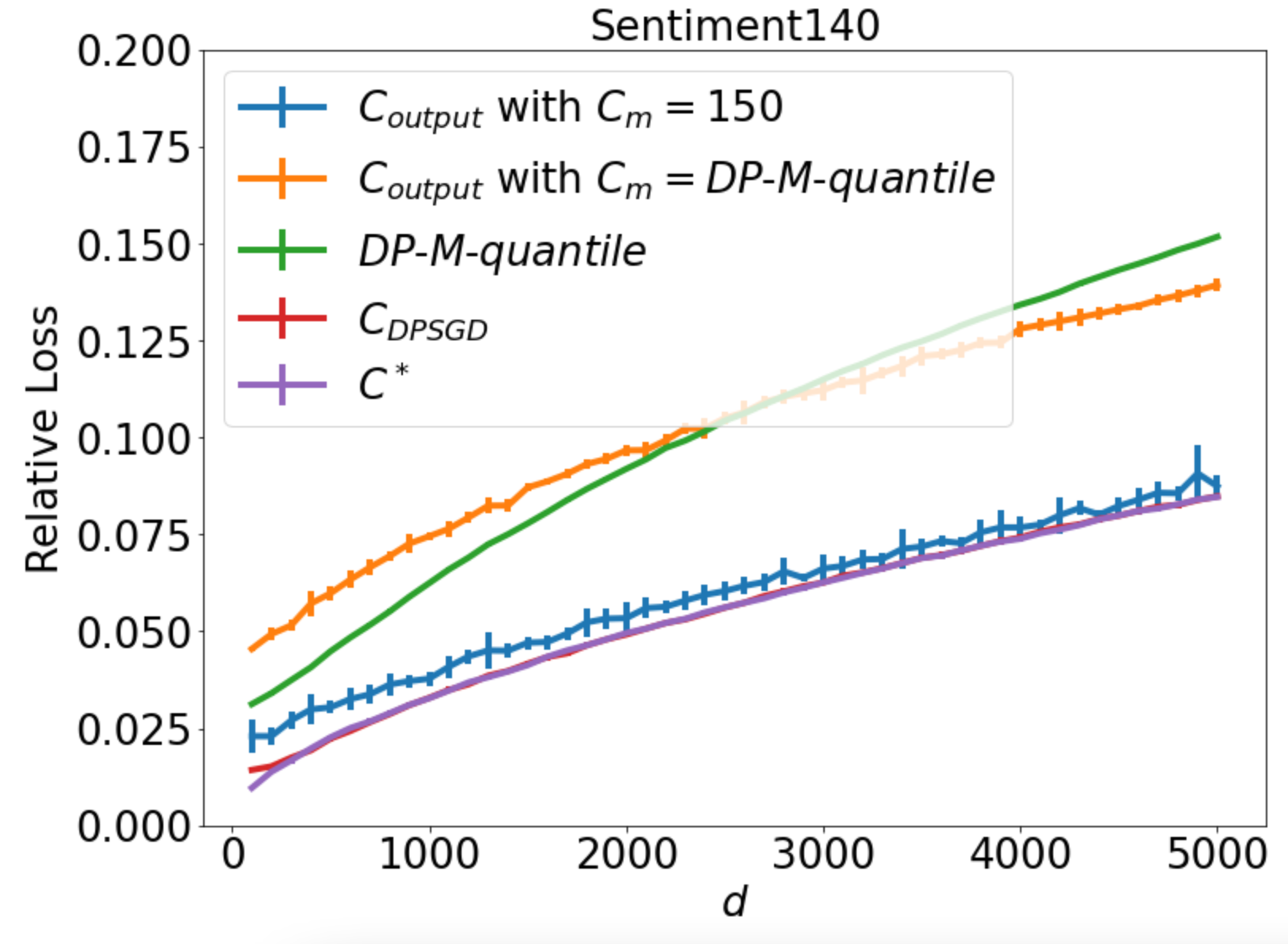}\quad\includegraphics[width=0.38\textwidth]{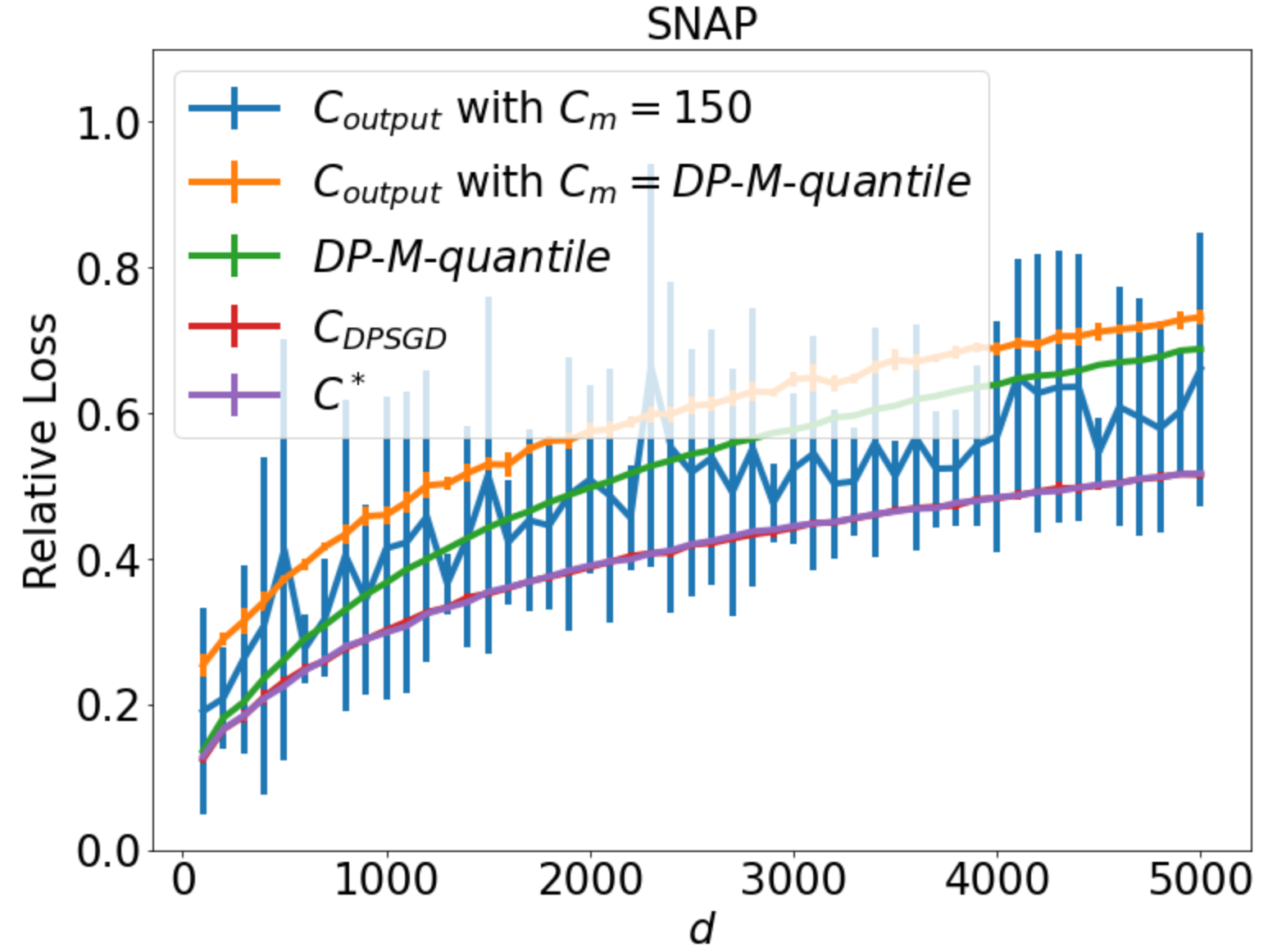}}
  \caption{Histogram estimation over bounded domains. Left: Sentiment140 dataset. Right: SNAP dataset.}
  \label{fig:hist-experiment2}
\end{figure}

\subsection{Setting $s=d$} 
\label{sec:s=d-experiment}

In this section, we demonstrate the performance of the bounded domain algorithm in Section~\ref{sec:estimation_no_assumption} when $s=d$, i.e., bound on sparsity is not known.  We can see that setting $s=d$ still yields a performance close to the true 2-approximation threshold, and much better than the quantile suggested by \cite{amin2019bounding}.
\begin{figure}[h]
    \centering
    \includegraphics[width=0.45\linewidth]{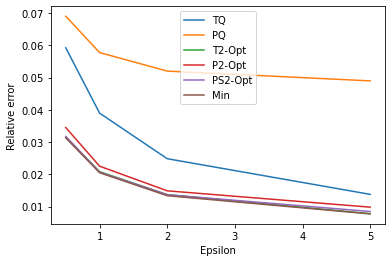}
    \includegraphics[width=0.45\linewidth]{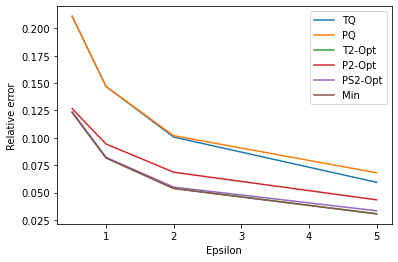}
    \caption{Bounded domain histogram estimation with $d=1000$ (left) and $d=10000$ (right).} \textbf{TQ} is the true (non-private) quantile suggested by \cite{amin2019bounding}. \textbf{PQ} is the private estimate of the quantile. \textbf{P2-Opt} is the result of DP-SGD with $s=d$. \textbf{PS2-Opt} is DP-SGD with $s=0.1d$. \textbf{T2-Opt} is the true 2-approximation threshold. \textbf{Min} is the best threshold (computed by a linear search). 
    \label{fig:k1000}
\end{figure}

\end{document}